\documentclass[12pt]{article}
\usepackage{url}
\usepackage{natbib,amsmath,amsxtra,amssymb,latexsym,epsfig,amscd,amsthm,epsfig}
\usepackage{algorithm2e}
\usepackage{amsfonts}
\usepackage{mathtools}
\usepackage{float}
\restylefloat{figure}
\usepackage{booktabs}
\usepackage{multirow}
\usepackage{color}
\usepackage{placeins}
\usepackage{lscape}
\usepackage{amsmath}
\usepackage{lineno}

\usepackage[mathscr]{eucal}
\usepackage{graphicx}
\usepackage{tikz}
\usetikzlibrary{calc}
\usepackage{caption}
\usepackage{multicol,xcolor}
\usepackage{epsfig} 
\usepackage{epstopdf}
\usepackage{cases}
\usepackage{color}
\usepackage{hyperref}
\usepackage{bm}  
\usepackage{bbm} 

\newtheorem{thm}{Theorem}

\setlength{\oddsidemargin}{-0.08in}
\setlength{\evensidemargin}{-0.08in}
\setlength{\textheight}{9.0in}
\setlength{\textwidth}{6.5in}
\setlength{\topmargin}{-0.5in}

%



\newtheorem{lm}{Lemma}[section]
\newtheorem{rmk}{Remark}[section]

\newtheorem{deff}{Definition}[section]

\theoremstyle{definition}

\theoremstyle{remark}

\numberwithin{equation}{section}



\newcommand{\eps}{\varepsilon}

\newcommand{\M}{\mathcal{M}}

\newcommand{\E}{\mathbb{E}}

\newcommand{\Sc}{\mathcal{S}}

\newcommand{\R}{\mathbb{R}}

\newcommand{\Var}{\mathbb{V}}

\newcommand{\wtd}{\widetilde}

\numberwithin{equation}{section}


\newcommand{\bed}{\begin{displaymath}}
\newcommand{\eed}{\end{displaymath}}
\newcommand{\bea}{\bed\begin{array}{rl}}
\newcommand{\eea}{\end{array}\eed}

\newcommand{\barray}{\begin{array}{ll}}
\newcommand{\earray}{\end{array}}
\newcommand{\diag}{{\rm diag}}

\newcommand{\Cov}{\mathrm{Cov}}

\def\bar{\overline}

\def\a.s{\text{\;a.s.\;}}

\def\KL{\text{\rm KL}}
\newcommand{\beq}[1]{\begin{equation} \label{#1}}
\newcommand{\eeq}{\end{equation}}

\DeclareMathOperator*{\argmax}{arg\,max}
\DeclareMathOperator*{\argmin}{arg\,min}

\newcounter{mntcomm}


\title{Variational Bayes on Manifolds}
\author{Minh-Ngoc Tran\thanks{Discipline of Business Analytics, University of Sydney, Business School, Sydney, NSW, Australia. Email: minh-ngoc.tran@sydney.edu.au.},
Dang H. Nguyen\thanks{Department of Mathematics, University of Alabama, Tuscaloosa, AL 35487, United States. Email: dangnh.maths@gmail.com.},
Duy Nguyen\thanks{Department of Mathematics, Marist College, Poughkeepsie, NY 12601, United States. Email: nducduy@gmail.com.}
}

\begin{document}
\maketitle
\begin{abstract}
Variational Bayes (VB) has become a widely-used tool
for Bayesian inference in statistics and machine learning. Nonetheless, the development of the existing VB 
algorithms is so far generally restricted to
the case where the variational parameter space is Euclidean,
which hinders the potential broad application of VB methods.
This paper extends the scope of VB to
the case where the variational parameter space is a Riemannian manifold.
We develop an efficient manifold-based VB algorithm
that exploits both the geometric structure of the constraint parameter space
and the information geometry of the manifold of VB approximating probability distributions.
Our algorithm is provably convergent and achieves a convergence rate of order $\mathcal O(1/\sqrt{T})$ and $\mathcal O(1/T^{2-2\epsilon})$ for a non-convex evidence lower bound function and a strongly retraction-convex evidence lower bound function, respectively. 
We develop in particular two manifold VB algorithms, Manifold Gaussian VB and Manifold Wishart VB, and demonstrate through numerical experiments that 
the proposed algorithms are stable, less sensitive to initialization and compares favourably to existing VB methods.\\
\newline
\noindent\textbf{Keywords.} Marginal likelihood, variational Bayes, natural gradient, stochastic approximation, Riemannian manifold.
\end{abstract}
\newpage

\section{Introduction}
Increasingly complicated models in modern statistics and machine learning have called for more efficient Bayesian estimation methods. Of the Bayesian tools, Variational Bayes (VB) \citep{Waterhouse:1996,jordan1999introduction}
stands out as one of the most versatile alternatives to conventional Monte Carlo methods
for statistical inference in complicated models. 
VB approximates the posterior probability distribution by a member from a family of tractable distributions indexed by variational parameters $\lambda$ belonging to a parameter space $\M$. The best member is found by minimizing the Kullback-Leibler divergence from the candidate member to the posterior.
VB methods have found their application in a wide range of problems
including variational autoencoders \citep{kingma2013auto}, text analysis \citep{JMLR:v14:hoffman13a}, Bayesian synthetic likelihood \citep{ong2018variational}, deep neural nets \citep{Tran:2019}, to name but a few.

Most of the existing VB methods work with cases where the variational parameter space $\M$ is (a subset of) the Euclidean space $\mathbb R^d$.
This paper considers the VB problem where $\M$ is a Riemannian manifold, which naturally arises in many modern applications.
For example, in Gaussian VB where the VB approximating distribution is a multivariate Gaussian with mean $\mu$ and covariance $\Sigma$,
$\lambda=(\mu,\Sigma)$ belongs to the product manifold $\M=\M_1\otimes\M_2$ where $\M_1$ is an Euclidean manifold
and $\M_2$ is the manifold of  symmetric and positive definite matrices.
We develop manifold-based VB algorithms that cast Euclidean-based constrained VB problems as manifold-based unconstrained optimization problems  
under which the solution can be efficiently found by exploiting the geometric structure of the constraints. 
Optimization algorithms that work on manifolds often enjoy better numerical properties.
See the monograph of \cite{absil2009optimization} for recent advances.
 
Many Euclidean-based VB methods employ (Euclidean) stochastic gradient decent (SGD)
for solving the required optimization problem,
and it is well-known that the natural gradient \citep{Amari:1998} is of major importance in SGD.
The natural gradient, a geometric object itself, takes into account the information geometry of the family of approximating distributions
to help stabilize and speed up the updating procedure.
For a comprehensive review 
and recent development of the natural gradient descent in Euclidean spaces, the reader
is referred to \cite{martens2014new}.
Extending natural gradient decent for use in Riemannian stochastic gradient decent is a non-trivial task and of interest in many VB problems. 
This paper develops a mathematically formal framework for incorporating the natural gradient into manifold-based VB algorithms.
The contributions of this paper are threefold:
\begin{itemize}
\item We develop a doubly geometry-informed VB algorithm that exploits 
both the geometric structure of the manifold constraints of the variational parameter space,
and the information geometry of the manifold of the approximating family,
which leads to a highly efficient VB algorithm for Bayesian inference in complicated models.
\item The proposed manifold VB algorithm is provably convergent and achieves a convergence rate of order $\mathcal O(1/\sqrt{T})$ and $\mathcal O(1/T^{2-2\epsilon})$, with $\epsilon\in(0,1)$ and $T$ the number of iterations, for a non-convex lower bound function and a strongly retraction-convex lower bound function, respectively. 
\item We develop in detail a Manifold Gaussian VB algorithm and a Manifold Wishart VB algorithm, both can be used as a general estimation method for Bayesian inference.  
The numerical experiments demonstrate that these manifold VB algorithms work efficiently, are more stable and less sensitive to initialization as
compared to some existing VB algorithms in the literature.
We would like to emphasize that making VB more stable and less initialization-sensitive is of major importance in the current VB literature.
We also apply our VB method to estimating a financial time series model and demonstrate its high accuracy in comparison with a ``gold standard'' Sequential Monte Carlo method.
\end{itemize}

\noindent The paper is organized as follows. 
Section \ref{sec: review VB} reviews the VB method on Euclidean spaces and sets up notations. 
Section \ref{sec: optimisation natural gradient} develops the manifold-based VB algorithm
and Section \ref{sec: optimisation 1} studies its convergence properties.
Section \ref{sec: applications} presents the Manifold Gaussian VB and the Manifold Wishart VB algorithms,
and their applications. Section \ref{sec: conclusions} concludes.
The technical proofs are presented in the Appendix.

\section{VB algorithms on Euclidean spaces}\label{sec: review VB}
This section gives a brief overview of VB methods where the variational parameter $\lambda$ lies in (a subset of) a Euclidean space.
It also gives the definition of the natural gradient, and the motivation for extending the Euclidean-based VB problem into manifolds.

Let $y$ be the data and $p(y|\theta)$ the likelihood function based on a postulated model, with $\theta$ the set of model parameters to be estimated.
Let $p(\theta)$ be the prior. Bayesian inference requires computing expectations with respect to the posterior distribution whose density (with respect to some reference measure such as the Lebesgue measure) is
\[p(\theta|y)=\frac{p(\theta)p(y|\theta)}{p(y)},\]
where $p(y)=\int p(\theta)p(y|\theta)d\theta$, called the marginal likelihood.
It is often difficult to compute such expectations, partly because the density $p(\theta|y)$ itself is intractable as the normalizing constant $p(y)$ is often unknown.
For simple models, Bayesian inference is often performed using Markov Chain Monte Carlo (MCMC), which estimates 
expectations w.r.t. $p(\theta|y)$ by sampling from it.
For models where $\theta$ is high dimensional or has a complicated structure, MCMC methods in their current development are either not applicable or very time consuming.
In the latter case, VB is often an attractive alternative to MCMC. 
VB approximates the posterior $p(\theta|y)$ by a probability distribution with density $q_\lambda(\theta)$, $\lambda\in\M$ - the variational parameter space, belonging to some tractable family of distributions such as Gaussian. The best $\lambda$ is found by minimizing the Kullback-Leibler (KL) divergence {\it from} $q_\lambda(\theta)$ {\it to} $p(\theta|y)$ 
\[\lambda^*=\arg\min_{\lambda\in\M}\left\{ \KL(q_\lambda\|p(\cdot|y))=\int q_\lambda(\theta)\log\frac{q_\lambda(\theta)}{p(\theta|y)}d\theta\right\}.\]
It is easy to see that
\[\KL(q_\lambda\|p(\cdot|y)) = -\int q_\lambda(\theta)\log\frac{p(\theta)p(y|\theta)}{q_\lambda(\theta)}d\theta+\log p(y),\]
thus minimizing  KL is equivalent to maximizing the lower bound on $\log\;p(y)$
\begin{equation}\label{eq:lower bound}
\mathcal L(\lambda)=\int q_\lambda(\theta)\log\frac{p(\theta)p(y|\theta)}{q_\lambda(\theta)}d\theta.
\end{equation}
SGD techniques are often employed to solve this optimization problem.
The VB approximating distribution $q_\lambda(\theta)$ with the optimized $\lambda$ is then used for Bayesian inference.
See \cite{Tran:2021tutorial} for an accessible tutorial introduction to VB.

Let 
\[\mathcal N=\{q_\lambda(\theta):\lambda\in\M\}\]
be the set of VB approximating probability distributions parameterized by $\lambda$, and 
\begin{equation}\label{eq: Fisher matrix}
I_F(\lambda):=\Cov_{q_\lambda}(\nabla_\lambda\log q_\lambda(\theta))=
\mathbb E_{q_\lambda}[\nabla_\lambda\log q_\lambda(\theta) (\nabla_\lambda\log q_\lambda(\theta))^\top]
\end{equation}
be the Fisher information matrix w.r.t. $q_\lambda$.
By the Taylor expansion, we have that
\begin{align}
\KL(q_{\lambda}|| q_{\lambda+\epsilon})
&\approx \KL(q_{\lambda}||q_{\lambda})
+(\nabla_{\lambda^{\prime}} \KL(q_{\lambda}||q_{\lambda^{\prime}})\vert_{\lambda^{\prime}=\lambda})^\top \epsilon
+\frac{1}{2}\epsilon^\top I_F(\lambda)\epsilon\notag\\
&=- \mathbb E_{q_{\lambda}}(\nabla_{\lambda}\log q_{\lambda}(\theta) )^\top\epsilon +\frac{1}{2}\epsilon^\top I_F(\lambda)\epsilon\notag\\
&=\frac{1}{2}\epsilon^\top I_F(\lambda)\epsilon.
\end{align}
This shows that the local KL divergence around the point $q_\lambda\in\mathcal N$ is characterized by the Fisher matrix $I_F(\lambda)$.
Formally, $\mathcal N$ can be made into a Riemannian manifold with 
the Riemannian metric induced by the Fisher information matrix \citep{Rao:1945,Amari:1998}.

Assume that the objective function $\mathcal L$ is smooth enough, then
\[\mathcal L(\lambda+\epsilon)\approx \mathcal L(\lambda)+\nabla_\lambda\mathcal L(\lambda)^\top\epsilon.\]
The steepest ascent direction $\epsilon$ for maximizing $\mathcal L(\lambda+\epsilon)$ among all the directions with a fixed length $\|\epsilon\|:=\epsilon^\top I_F(\lambda)\epsilon=l$
is
\begin{equation}\label{eq:nat grad opt}
\argmax_{\epsilon: \epsilon^\top I_F(\lambda)\epsilon=l}\Big\{\nabla_\lambda\mathcal L(\lambda)^\top\epsilon\Big\}. 
\end{equation}
By the method of Lagrangian multipliers, this steepest ascent is 
\begin{equation}\label{eq:natural gradient solution}
\epsilon=\nabla_{\lambda}^{\text{nat}}\mathcal L(\lambda):=I^{-1}_F(\lambda)\nabla_{\lambda}\mathcal L(\lambda).
\end{equation}
\cite{Amari:1998} termed this the natural gradient and popularized it in machine learning. 
In the statistics literature, the steepest ascent in the form \eqref{eq:natural gradient solution} has been used  
for a long time and is often known as Fisher's scoring in the context of maximum likelihood estimation (see, e.g., \cite{10.2307/2336476}).
We adopt the term natural gradient in this paper.
The efficiency of the natural gradient over the ordinary gradient  
has been well documented \citep{6789770,JMLR:v14:hoffman13a,tran2017variational}.
A remarkable property of the natural gradient is that is is invariant under parameterization \citep{martens2014new}, i.e. it is coordinate-free and an intrinsic geometric object.
This further motivates the use of natural gradient in optimization on manifolds.

Most of the VB methods and natural gradient descent are developed for cases where the variational parameter $\lambda$ lies in an unconstrained Euclidean space.
In many situations, however, $\lambda$ belongs to a non-linear constrained space that forms a differential manifold.
A popular example is Gaussian VB where the covariance matrix $\Sigma$ of size $d\times d$ is subject to the symmetric and positive definite constraint.
\cite{OngNott2017} avoid the difficulty of dealing with this constraint by using a factor decomposition $\Sigma=BB^\top+D^2$,
where $B$ a full-rank matrix of size $d\times p$ with $p\leq d$, and $D$ a diagonal matrix.
Such a decomposition is invariant under orthogonal transformations of $B$, i.e.
\[\Sigma=BB^\top+D^2=B'{B'}^\top+D^2\] 
 for all $B'=BO$ with $O$ an orthogonal matrix, i.e. $OO^\top=I_p$.
That is, the variational parameter $B$ lies in a quotient manifold where each point in this manifold is an equivalence class
\begin{equation}\label{eq: quotient manifold}
[B]=\{BO: OO^\top=I_p\}.
\end{equation}
This manifold structure is not considered in \cite{OngNott2017}.
\cite{zhou2019manifold} take into account this manifold structure and report some improvement over the plain VB methods. 
Another example is Wishart VB where the VB distribution $q_\lambda(\theta)$ is an inverse-Wishart distribution $IW(\nu,\Sigma)$.
Here, the variational parameter $\Sigma$ lives in the manifold of the symmetric and positive definite matrices.

\subsection*{Related work}
As we employ the SGD method
for optimizing the lower bound $\mathcal L(\lambda)$,
our paper is related to the recent development of SGD algorithms on Riemannian manifolds.
\cite{bonnabel2013stochastic} is one of the first to develop SGD where the cost function is defined
on a Riemannian manifold. It is showed in his paper that under some suitable conditions
the Reimannian SGD algorithm converges to a critical point of the cost function. In a recent paper,
\cite{kasai2019adaptive} propose
an adaptive SGD on Riemannian manifolds, which uses different learning rates
for different coordinates. Their method is proved to converge to a critical point of the cost function at a rate $\mathcal O (\log(T)/\sqrt{T})$.
For a recent discussion of generalization of Euclidean adaptive SGD algorithms, such as Adam and Adagrad, 
to Riemannian manifolds, see \cite{becigneul2018riemannian}.  
The monograph of \cite{absil2009optimization} provides an excellent account of
recent development on optimization on matrix manifolds. 
Companion user-friendly software such as Manopt \citep{manopt} has been developed
to assist fast growing research in Riemannian optimization.

The natural gradient has been widely used in the machine learning literature;
see, e.g., \cite{6789770}, \cite{JMLR:v14:hoffman13a}, \cite{Khan.Lin:2017} and \cite{martens2014new}.
However, most of the existing work only consider cases where the variational parameter $\lambda$ belongs to an Euclidean space.
\cite{zhou2019manifold} is the only paper that we are aware of develops a VB method on manifolds.
However, their paper only considers the Factor Gaussian VB for the particular quotient manifold in \eqref{eq: quotient manifold},
and does not consider natural gradient.
Our paper develops a general VB method for Riemannian manifolds that incorporates the natural gradient,
and provides a careful convergence analysis.

\section{VB on manifolds with the natural gradient}\label{sec: optimisation natural gradient}
This section presents our proposed VB algorithm on manifolds. Recall that we are interested in a VB problem where
the variational parameter $\lambda$ lies in a Riemannian manifold $\M$, i.e. we wish to solve the following optimization problem
\[\argmax_{\lambda\in\M}\mathcal L(\lambda).\]
In order to incorporate the natural gradient into Riemannian SGD, we view the manifold $\M$ as embedded in a Riemannian manifold $\bar\M\subset\mathbb{R}^d$, whose Riemannian metric is defined by the Fisher information matrix $I_F(\lambda)$.
Let $T_\lambda\bar\M$ be the tangent space to $\bar\M$ at $\lambda\in\bar\M$. The inner product between two tangent vectors $\zeta_\lambda,\xi_\lambda\in T_\lambda\bar\M$ is defined as
\begin{equation}\label{eq:fisher-rao}
<\zeta_\lambda,\xi_\lambda>=\zeta_\lambda^\top I_F(\lambda)\xi_\lambda.
\end{equation}
For VB on manifolds without using the natural gradient, this inner product is the usual Euclidean metric, i.e. $<\zeta_\lambda,\xi_\lambda>=\zeta_\lambda^\top \xi_\lambda$.
The metric in \eqref{eq:fisher-rao} is often referred to as the Fisher-Rao metric.
Let $\bar{\mathcal L}$ be a differentiable function defined on $\bar\M$ such that its restriction on $\M$ is the lower bound $\mathcal L$.
Similar to \eqref{eq:nat grad opt}-\eqref{eq:natural gradient solution}, it can be shown that the steepest ascent direction at $\lambda\in\bar\M$ for optimizing the objective function $\bar{\mathcal L}(\lambda)$, i.e. the direction of
\[\argmax_{\eta_\lambda\in T_\lambda\bar\M,\|\eta_\lambda\|=1}\text{D}\bar{\mathcal L}(\lambda)[\eta_\lambda],\]
is the natural gradient 
\begin{equation}\label{eq:natural gradient}
\nabla_{\lambda}^{\text{nat}}\bar{\mathcal L}(\lambda) = I_F^{-1}(\lambda)\nabla_\lambda\bar{\mathcal L}(\lambda),\;\;\lambda\in\bar\M.
\end{equation}
Here, $\text{D}\bar{\mathcal L}(\lambda)[\eta_\lambda]$ denotes the directional derivative of $\bar{\mathcal L}$ at $\lambda$ in the direction of $\eta_\lambda$,
and $\nabla_\lambda\bar{\mathcal L}(\lambda)$ is the usual Euclidean gradient vector of $\bar{\mathcal L}(\lambda)$.
We note that, for $\lambda\in\M$,
\begin{equation*}
\nabla_{\lambda}^{\text{nat}}\bar{\mathcal L}(\lambda) = I_F^{-1}(\lambda)\nabla_\lambda\bar{\mathcal L}(\lambda) =I_F^{-1}(\lambda)\nabla_\lambda{\mathcal L}(\lambda)=\nabla_{\lambda}^{\text{nat}}{\mathcal L}(\lambda).
\end{equation*}
We recall that the Riemannian gradient of a smooth function $f(\lambda)$ on a Riemannian manifold $\M$, embedded in $\mathbb{R}^d$ and equipped with the Riemannian metric $<,>$,
is the unique vector $\text{grad}f(\lambda)$ in the tangent space $T_\lambda\M$ at $\lambda\in\M$ such that
\[<\text{grad} f(\lambda),\xi_\lambda>=\text{D}f(\lambda)[\xi_\lambda],\;\;\forall\xi_\lambda\in T_\lambda\M.\]
The following lemma is important for the purpose of this paper. It shows that the natural gradient $\nabla_{\lambda}^{\text{nat}}\bar{\mathcal L}(\lambda)$ is a Riemannian gradient defined in the ambient manifold $\bar\M$,
which leads to a formal framework for associating the natural gradient to the Riemannian gradient of the lower bound $\mathcal L$ defined in the manifold $\M$.

\begin{lm}\label{lemma 1} The natural gradient of the function $\bar{\mathcal L}$ on the Riemannian manifold $\bar\M$ with the Fisher-Rao metric \eqref{eq:fisher-rao}
is the Riemannian gradient of $\bar{\mathcal L}$. In particular, the natural gradient at $\lambda$ belongs to the tangent space to $\bar\M$ at $\lambda$.
\end{lm}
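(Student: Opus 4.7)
The plan is to verify directly that $\nabla_\lambda^{\text{nat}}\bar{\mathcal L}(\lambda) = I_F^{-1}(\lambda)\nabla_\lambda\bar{\mathcal L}(\lambda)$ satisfies the defining property of the Riemannian gradient spelled out just before the lemma, and then invoke uniqueness. Because $\bar\M$ is viewed as the ambient Riemannian manifold (full-dimensional in $\mathbb R^d$), its tangent space $T_\lambda\bar\M$ is identified with $\mathbb R^d$, so the vector $I_F^{-1}(\lambda)\nabla_\lambda\bar{\mathcal L}(\lambda)$ automatically lies in $T_\lambda\bar\M$; this takes care of the ``in particular'' part of the statement.

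The main computation proceeds in two short steps. First, since $\bar{\mathcal L}$ is an ordinary differentiable function on an open subset of $\mathbb R^d$, for any $\xi_\lambda\in T_\lambda\bar\M$ the directional derivative has the standard Euclidean expression
\[
\text{D}\bar{\mathcal L}(\lambda)[\xi_\lambda] = \nabla_\lambda\bar{\mathcal L}(\lambda)^\top \xi_\lambda.
\]
Second, I would evaluate the Fisher--Rao inner product of the candidate natural gradient against $\xi_\lambda$ using \eqref{eq:fisher-rao}:
\[
\langle \nabla_\lambda^{\text{nat}}\bar{\mathcal L}(\lambda),\xi_\lambda\rangle
= \bigl(I_F^{-1}(\lambda)\nabla_\lambda\bar{\mathcal L}(\lambda)\bigr)^\top I_F(\lambda)\,\xi_\lambda.
\]
Because $I_F(\lambda)$ is symmetric and positive definite (it is a covariance matrix, see \eqref{eq: Fisher matrix}), $I_F^{-1}(\lambda)$ is also symmetric and $I_F^{-1}(\lambda)^\top I_F(\lambda) = I$, so the right-hand side collapses to $\nabla_\lambda\bar{\mathcal L}(\lambda)^\top \xi_\lambda$, which matches $\text{D}\bar{\mathcal L}(\lambda)[\xi_\lambda]$.

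Having verified the defining identity for every tangent vector $\xi_\lambda$, uniqueness of the Riemannian gradient under the Fisher--Rao metric immediately yields $\text{grad}\,\bar{\mathcal L}(\lambda)=\nabla_\lambda^{\text{nat}}\bar{\mathcal L}(\lambda)$. There is no real obstacle here: the result is essentially a tautology once one notices that the inverse metric tensor is precisely what converts the Euclidean (differential) representation of a covector into its Riemannian (gradient-vector) representation. The only delicate point worth stressing in the write-up is the reliance on symmetry and invertibility of $I_F(\lambda)$, and the implicit assumption that the ambient $\bar\M$ has the same dimension as $\mathbb R^d$ so that no orthogonal projection onto $T_\lambda\bar\M$ is required.
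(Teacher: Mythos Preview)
Your proposal is correct and follows essentially the same approach as the paper: verify that $I_F^{-1}(\lambda)\nabla_\lambda\bar{\mathcal L}(\lambda)$ satisfies the defining identity of the Riemannian gradient with respect to the Fisher--Rao metric (using the symmetry of $I_F(\lambda)$), and then invoke uniqueness. The only minor difference is that the paper spells out the uniqueness step explicitly via $(\zeta_\lambda-\text{grad}\bar{\mathcal L}(\lambda))^\top I_F(\lambda)\xi_\lambda=0$ for all $\xi_\lambda$, whereas you simply cite uniqueness; both are fine.
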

We now need to associate the Riemannian gradient $\nabla_{\lambda}^{\text{nat}}\bar{\mathcal L}(\lambda)$ to the Riemannian gradient of the lower bound $\mathcal L(\lambda)$ defined in $\M$;
the latter is what we need for using Riemannian SGD to optimize $\mathcal L(\lambda)$. This is done in the two cases: $\M$ is a submanifold (Section \ref{subsec:Riemannian submanifolds})
and $\M$ is a quotient manifold (Section \ref{subsec:Quotient manifolds}).

\subsection{Riemannian submanifolds}\label{subsec:Riemannian submanifolds} 
Suppose that $\M$ is a submanifold of $\bar\M$.
In order to define the Riemannian gradient of the lower bound $\mathcal L$ defined on the manifold $\M$, we need to equip $\M$ with a Riemannian metric.
In most cases, this metric is inherited from that of $\bar\M$ in a natural way.
Since $T_\lambda\M$ is a subspace of $T_\lambda\bar\M$, the Riemannian metric of 
$\zeta_\lambda,\xi_\lambda\in T_\lambda\M$ can be defined as
\[<\zeta_\lambda,\xi_\lambda>=\zeta_\lambda^\top I_F(\lambda)\xi_\lambda,\]
with $\zeta_\lambda,\xi_\lambda$ viewed as vectors in $T_\lambda\bar\M$.
With this metric, we can define the orthogonal complement $(T_\lambda\M)^\bot$ of $T_\lambda\M$ in $T_\lambda\bar\M$, i.e. $T_\lambda\bar\M=T_\lambda\M\oplus (T_\lambda\M)^\bot$.
Write
\[\text{grad}\bar{\mathcal L}(\lambda)=\text{Proj}_\lambda\; \text{grad}\bar{\mathcal L}(\lambda)+\text{Proj}_\lambda^\bot\; \text{grad}\bar{\mathcal L}(\lambda)\]
where $\text{Proj}_\lambda$ and $\text{Proj}_\lambda^\bot$ denote the projections on $T_\lambda\M$ and $(T_\lambda\M)^\bot$, respectively.
Recall that $\text{grad}\bar{\mathcal L}(\lambda)=\nabla_{\lambda}^{\text{nat}}\bar{\mathcal L}(\lambda)=I_F^{-1}(\lambda)\nabla_\lambda{\mathcal L}(\lambda)$, $\lambda\in\M$.
Then, the Riemannian gradient of $\mathcal L$ is the projection of $\text{grad}\bar{\mathcal L}$ on $T_\lambda\M$
\begin{equation}\label{eq:projection}
\text{grad}\mathcal L(\lambda)=\text{Proj}_\lambda\; \text{grad}\bar{\mathcal L}(\lambda).
\end{equation}
This is because
\begin{align*}
<\text{grad}\mathcal L(\lambda),\eta_\lambda>&=<\text{grad}\bar{\mathcal L}(\lambda)-\text{Proj}_\lambda^\bot\; \text{grad}\bar{\mathcal L}(\lambda),\eta_\lambda>\\
&=<\text{grad}\bar{\mathcal L}(\lambda),\eta_\lambda>\\
&=\text{D}\bar{\mathcal L}(\lambda)[\eta_\lambda]=\text{D}{\mathcal L}(\lambda)[\eta_\lambda],\;\;\forall\eta_\lambda\in T_\lambda\M.
\end{align*}

In some cases, such as Gaussian VB in Section \ref{sec: Gaussian VB},
$T_\lambda\M\cong T_\lambda\bar\M$, then $\text{grad}\mathcal L(\lambda)=\text{grad}\bar{\mathcal L}(\lambda)$,
i.e. the natural gradient is the Riemannian gradient of $\mathcal L$.
In some other cases, however, using the inherited metric might lead to a projection $\text{Proj}_\lambda$ that is cumbersome to compute.
In such cases, one needs to use an alternative Riemannian metric on $\M$ such that the projection $\text{Proj}_\lambda$ is easy to compute.
Below we give an example in the case of Stiefel manifold, which is a popular manifold in many applications.
\newline

\noindent{\bf Stiefel manifolds.} Suppose that $\M$ is a Stiefel manifold $\M=\mathcal S(p,n)$ defined as
\begin{equation}\label{eq: Stiefel manifold}
\mathcal S(p,n)=\{W\in \text{Mat}(n,p): W^\top W= I_p\},
\end{equation}
where $\text{Mat}(n,p)$ is the set of real matrices of size $n\times p$.
We can think of $\M$ as embedded in $\bar\M=\mathbb{R}^d$, $d=n\cdot p$, equipped with the Fisher-Rao metric
\begin{equation}\label{eq:fisher-rao 1}
<\zeta_W,\xi_W>=\big(\text{vec}(\zeta_W)\big)^\top I_F(W)\text{vec}(\xi_W),\;\;\zeta_W,\xi_W\in T_W\bar\M\cong\text{Mat}(n,p),
\end{equation}
where $\text{vec}(\cdot)$ denotes the vectorization operator, $I_F(W)=\text{cov}_{q_W}(\nabla_{\text{vec}(W)}\log q_W(\theta))$
is the Fisher matrix defined in \eqref{eq: Fisher matrix} with $q_W(\theta)$ the variational distribution.
The natural gradient of function $\bar{\mathcal L}$ at $W\in\M$ is
\begin{equation}\label{eq: grad Stiefel}
\text{grad}\bar{\mathcal L}(W)=\text{vec}^{-1}\Big(I_F(W)^{-1}\nabla_{\text{vec}(W)}\mathcal L(W)\Big)\in\text{Mat}(n,p)
\end{equation}
where $\text{vec}^{-1}$ is the inverse of $\text{vec}$, sending a $n p$-vector to the corresponding matrix in $\text{Mat}(n,p)$.
It is easy to see that the tangent space of $\M$ at $W$ is
\[T_W\M=\{Z\in\text{Mat}(n,p): Z^\top W+W^\top Z=0_{p\times p}\}.\]
If we equip $\M$ with the Riemannian metric defined in \eqref{eq:fisher-rao 1}, the projection on $T_W\M$ is cumbersome to compute.
We therefore opt to use the usual Euclidean metric
\[
<\zeta_W,\xi_W>_{Euc}=\text{trace}(\zeta_W^\top\xi_W)=\big(\text{vec}(\zeta_W)\big)^\top \text{vec}(\xi_W),\;\;\zeta_W,\xi_W\in T_W\M.
\]
The following lemma gives an expression for the Riemannian gradient of $\mathcal L$ defined on the Stiefel manifold,
and is useful for many applications involving Stiefel manifolds.  
Similar results to Lemma \ref{lem: Stiefel projection} can be found in the literature (see., e.g., \citep{Edelman:1998}),
however, here we state and prove the results specifically for the case $G$ is the natural gradient.

\begin{lm}\label{lem: Stiefel projection}
Let $\mathcal L$ be a function on the Stiefel manifold $\M$ equipped with the usual Euclidean metric.
The Riemannian gradient of $\mathcal L$ at $W$ is
\begin{equation}\label{eq: Stiefel projection}
\text{grad}\mathcal L(W) = (I_n-WW^\top)G+W\text{skew}(W^\top G)
\end{equation}
with $G=\text{grad}\bar{\mathcal L}(W)$ given in \eqref{eq: grad Stiefel}, and $\text{skew}(A):=(A-A^\top)/2$.
\end{lm}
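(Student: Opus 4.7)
The plan is to apply the projection formula \eqref{eq:projection} with respect to the usual Euclidean (Frobenius) inner product on $\text{Mat}(n,p)$, so that $\text{grad}\mathcal L(W)$ is the orthogonal projection of the natural gradient $G=\text{grad}\bar{\mathcal L}(W)$ onto $T_W\M$. The work therefore reduces to (i) identifying the orthogonal complement of $T_W\M$ in $\text{Mat}(n,p)$ and (ii) carrying out the decomposition $G = Z + Z^\perp$ with $Z\in T_W\M$.

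First, recall from the excerpt that $T_W\M=\{Z\in\text{Mat}(n,p): Z^\top W+W^\top Z=0_{p\times p}\}$. I would then claim that its Frobenius-orthogonal complement is
\[
(T_W\M)^\perp=\{WS:S\in\text{Mat}(p,p),\ S^\top=S\}.
\]
To see this, note that for any such $WS$ and any $Z\in T_W\M$, $\langle WS,Z\rangle_{Euc}=\text{trace}(S^\top W^\top Z)=\text{trace}(S\cdot W^\top Z)$, which vanishes because $S$ is symmetric while $W^\top Z$ is skew-symmetric (by the definition of $T_W\M$). A simple dimension count then confirms this is the full orthogonal complement.

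Next, I would write $G = Z + WS$ with $Z\in T_W\M$ and $S=S^\top$, and solve for $S$. Multiplying on the left by $W^\top$ and using $W^\top W=I_p$ gives $W^\top G = W^\top Z + S$. Taking the symmetric part and using $W^\top Z+Z^\top W=0$, we obtain $S = \text{sym}(W^\top G) := (W^\top G+G^\top W)/2$. Hence
\[
Z = G - WS = G - \tfrac12 W(W^\top G+G^\top W).
\]
The final step is a one-line algebraic rearrangement to match the claimed form:
\[
G-\tfrac12 W W^\top G-\tfrac12 WG^\top W = (I_n-WW^\top)G + \tfrac12 W(W^\top G-G^\top W) = (I_n-WW^\top)G+W\,\text{skew}(W^\top G),
\]
which gives \eqref{eq: Stiefel projection}.

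The calculations here are essentially routine; the only conceptual care needed is in the first step, namely verifying that the proposed $(T_W\M)^\perp$ is correct with respect to the Euclidean metric (not the Fisher--Rao one), since this is precisely where the excerpt's choice to abandon the inherited metric in favour of the Euclidean metric on the Stiefel manifold pays off. Once that characterization is in hand, the rest follows by straightforward linear algebra and the uniqueness of orthogonal decomposition.
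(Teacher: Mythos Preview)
Your proof is correct. Both you and the paper compute the Euclidean-orthogonal projection of $G$ onto $T_W\M$, but the mechanics differ. The paper introduces an orthonormal completion $W_\perp\in\text{Mat}(n,n-p)$, writes every matrix as $U=WU_W+W_\perp U_{W_\perp}$, characterises the tangent space as $\{WZ_W+W_\perp Z_{W_\perp}:Z_W=-Z_W^\top\}$, and then matches coefficients in $\langle U,Z\rangle_{Euc}=\text{tr}(G^\top Z)$ to read off $U_W=\text{skew}(G_W)$ and $U_{W_\perp}=G_{W_\perp}$ before translating back. Your route is more economical: you identify the normal space $(T_W\M)^\perp=\{WS:S=S^\top\}$ directly, solve $G=Z+WS$ for $S=\text{sym}(W^\top G)$, and rearrange. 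Your approach avoids the auxiliary object $W_\perp$ entirely and is a cleaner linear-algebra computation; the paper's coordinate decomposition, on the other hand, makes the tangent/normal split more visually explicit and is closer to the classical treatment in Edelman et al.\ and Tagare. Either way the result is the same formula, and nothing is missing from your argument.
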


\subsection{Quotient manifolds}\label{subsec:Quotient manifolds}
This section derives the Riemannian gradient of $\mathcal L$ when $\M$ is a quotient manifold induced from the ambient manifold $\bar\M$.
Suppose that $\bar\M\subset\mathbb{R}^d$
is a Riemannian manifold with the Riemannian metric $<\cdot,\cdot>_{\bar{\mathcal M}}$.
Suppose that there is an equivalence relation on $\bar\M$ defined as
\[\lambda,\lambda'\in\bar\M,\;\;\lambda\sim\lambda'\;\;\text{if and only if}\;q_\lambda=q_{\lambda'},\]
and thus $\mathcal L(\lambda)=\mathcal L(\lambda')$.
This is the case of Gaussian VB with the covariance matrix $\Sigma$ having a factor decomposition.
Define the equivalence class
\[[\lambda]=\{\lambda'\in\bar\M: q_{\lambda'}=q_\lambda\},\]
i.e., the class of all parameterizations $\lambda$ that represent the same distribution. 
Let
\[\M:=\bar\M/\sim:=\{[\lambda]:\lambda\in\bar\M\}\]
and define the canonical projection 
\begin{equation}\label{eq:canonical proj}
\pi:\bar\M\to\M=\bar\M/\sim,\;\lambda\mapsto[\lambda].
\end{equation}
Then we can endow the quotient set $\M$ with the topology induced from $\bar\M$ by the projection $\pi$.
This makes $\M$ become a smooth manifold called the quotient manifold, see \cite{absil2009optimization}.
If we define $L:\mathcal\M\to\mathbb{R},\;[\lambda]\mapsto L([\lambda])=\mathcal L(\lambda)$, i.e. $\mathcal L=L\circ\pi$,
then $L$ is the function defined on $\mathcal\M$ that we want to optimise.
For optimisation on $\mathcal\M$, one needs to be able to represent numerically tangent vectors at each $[\lambda]\in\M$.
Geometrical objects in $\bar{\mathcal M}$, such as points $\lambda$ and tangent vectors, are vectors in the usual sense,
so they can be numerically represented in computer for numerical computation.
However, geometrical objects in the quotient manifold ${\mathcal M}$ are abstract,
much of reseach in quotient manifolds has been focused on how to represent these
geometrical objects numerically. The key tool is the concept of {\it horizontal lift}; see, e.g. \cite{absil2009optimization,Kobayashi1969Nomizu}.    
 
By the level set theorem \cite[Chapter 2]{loring2008introduction}, $\pi^{-1}([\lambda])$ is an embedded submanifold in $\bar\M$, hence, it admits a tangent space 
\[\mathcal V_\lambda:=T_\lambda\big(\pi^{-1}([\lambda])\big),\]   
called the vertical space, which is a linear subspace of $T_\lambda\bar\M$.
Let $\mathcal H_\lambda$ be the orthogonal complement of $\mathcal V_\lambda$ in $T_\lambda\bar\M$, called the horizontal space, i.e. 
$T_\lambda\bar\M=\mathcal H_\lambda\oplus \mathcal V_\lambda.$
The orthogonality here is w.r.t. the metric defined on $\bar{\mathcal M}$. 
For each tangent vector $\xi_{[\lambda]}$ at $[\lambda]\in\M$,
there exists an unique vector $\bar\xi_{\lambda}$ in the horizontal space $\mathcal H_\lambda$ such that \cite[Prop. 1.2]{Kobayashi1969Nomizu} 
\[\text{D}\pi(\lambda)[\bar\xi_{\lambda}]=\xi_{[\lambda]},\]
$\bar\xi_{\lambda}$ is called the horizontal lift of $\xi_{[\lambda]}$.
Then, as $\mathcal L=L\circ\pi$,
\begin{equation}\label{eq: horizontal lift pro}
\text{D}\mathcal L(\lambda)[\bar\xi_{\lambda}]=\text{D}L\big(\pi(\lambda)\big)\big(\text{D}\pi(\lambda)[\bar\xi_{\lambda}]\big)=\text{D}L\big([\lambda]\big)(\xi_{[\lambda]}),
\end{equation}
which shows that the directional derivative of $L$ in the direction of $\xi_{[\lambda]}$ is characterised by the 
directional derivative of $\mathcal L$ in the direction of the horizontal lift $\bar\xi_{\lambda}$.
Intuitively, for the optimization purposes, we can ignore the vertical space and just focus on the horizontal space, as the objective function $\mathcal L$ doesn't change along the vertical space.
It's worth noting that the property \eqref{eq: horizontal lift pro} does not depend on any particular choice $\lambda$ in $[\lambda]$.  

Let $\bar{\text{grad}\mathcal L(\lambda)}$ be the Riemannian gradient of $\mathcal L$ at $\lambda\in\bar{\mathcal M}$.
We have that, for all $\eta_\lambda\in \mathcal V_\lambda$ 
\[<\bar{\text{grad}\mathcal L(\lambda)},\eta_\lambda>_{\bar{\mathcal M}}=\text{D}\mathcal L(\lambda)[\eta_\lambda]=0,\]
as $\mathcal L(\lambda)$ doesn't change along the vertical space, which shows that $\bar{\text{grad}\mathcal L(\lambda)}\in \mathcal H_\lambda$.
Let $\text{grad} L([\lambda])$ be the tangent vector to $\M$ at $[\lambda]$ that has $\bar{\text{grad}\mathcal L(\lambda)}$ as its horizontal lift.
Then, by equipping $\M$ with the inner product inherited from $\bar{\mathcal M}$, 
\begin{align}
<\text{grad} L([\lambda]),\eta_{[\lambda]}>_{\mathcal M}&:=<\bar{\text{grad}\mathcal L(\lambda)},\bar\eta_\lambda>_{\bar{\mathcal M}}\notag\\
&=\text{D}\mathcal L(\lambda)[\bar\eta_\lambda]=\text{D} L([\lambda])(\eta_{[\lambda]}),\;\;\forall\eta_{[\lambda]}\in T_{[\lambda]}\M \label{eq: rie def}
\end{align}
we have that $\text{grad} L([\lambda])$ is the Riemannian gradient of $L$ at $[\lambda]\in\M$.
We note that \eqref{eq: rie def} does not depend on the choice of $\lambda\in[\lambda]$.
So, with the inherited inner product from $\bar\M$, the usual Riemannian gradient of $\mathcal L$ on $\bar\M$ is the horizontal lift of the Riemannian gradient of $L$ on $\M$.
This remarkable property of quotient manifolds makes it convenient for numerical optimisation problems.

\begin{rmk}
Technically, in order for the inherited Riemannian metric on $\M$ to be well-defined, it is often required in the literature that  
$<\bar\xi_\lambda,\bar\eta_\lambda>_{\bar\M}$
does not depend on $\lambda\in[\lambda]$. This condition is typically not satisfied when $\bar\M$ is equipped with the Fisher-Rao metric as considered in this paper.
However, as we show above, the Riemannian gradient of $L$ is still well-defined without this requirement, as \eqref{eq: rie def} holds for any $\lambda\in[\lambda]$.   
\end{rmk}

\subsection{Retraction}\label{subsec: Retraction}
After deriving the Riemannian gradient, which is the steepest ascent direction of the lower bound function $\mathcal L$ at the current point on the manifold,
we need to derive the {\it exponential map}\footnote{In general, an exponential map at $\lambda$ is defined locally near $\lambda$. In order to define exponential map on the entire tangent space, the manifold needs to be complete, see, e.g., \cite{loring2008introduction}. }, denoted by $\text{Exp}_{\lambda}(\xi_{\lambda})$, that projects a point on the tangent space back to the manifold.
Exponential map is a standard concept in differential geometry.
Intuitively, exponential maps are mappings that, given a point $\lambda$
on a manifold and a tangent vector $\xi_{\lambda}$ at $\lambda$,
generalize the concept ``$\lambda+\xi_{\lambda}$" in Euclidean spaces. $\text{Exp}_{\lambda}(\xi_{\lambda})$
is a point on the manifold that can be reached by leaving
from $\lambda$ and moving in the direction $\xi_{\lambda}$ while remaining on the manifold. We refer
to \cite{absil2009optimization} for a precise  definition and examples.
One major drawback of exponential maps is that their calculation is often cumbersome in practice.
{\it Retraction}, the first order approximation of the exponential map, is often used instead.
A retraction $R_\lambda:T_\lambda\M\to\M$ at $\lambda\in\M$ has the important property that it preserves 
gradients, i.e. the curve $\gamma_{\xi_\lambda}:t\mapsto R_\lambda(t\xi_\lambda)$ satisfies $\text{D}\gamma_{\xi_\lambda}(0)[\xi_\lambda]=\xi_\lambda$
for every $\xi_\lambda\in T_\lambda\M$. See \cite[Chapter 4]{absil2009optimization} for a formal definition of retraction 
and \cite{Manton:2002} for an interpretation of retraction from a local optimization perspective. 
Also see Figure \ref{f:retractionandvectortransport} (Left) for a visualization.

Closed-form formulae for retractions on common manifolds are available in the literature, see, for example, \cite{absil2009optimization}.
For instance, a popular retraction on the Stiefel manifold is
\begin{equation}\label{eq:QR retraction}
R_W(\xi_W)=\text{qf}(W+\xi_W),\;\;W\in\mathcal S(p,n),\;\; \xi_W\in T_W\mathcal S(p,n).
\end{equation}
Here $\text{qf}(A)=Q$, where $A=QR$ is the QR decomposition of $A\in\text{Mat}(n,p)$, $Q\in\mathcal S(p,n)$
and $R\in\text{Mat}(n,p)$ is upper triangular.
See \cite{Sato2019} for an efficient computation of this retraction based on the Cholesky QR factorization.
For quotient manifolds, a popular retraction is
\[R_{[\lambda]}(\xi_{[\lambda]})=\pi(\lambda+\bar\xi_\lambda),\;\;[\lambda]\in\M,\;\;\xi_{[\lambda]}\in T_{[\lambda]}\M,\]
$\bar\xi_\lambda$ is the horizontal lift of $\xi_{[\lambda]}$, and $\pi$ is the canonical projection in \eqref{eq:canonical proj}.

	\begin{figure}[ht]
		\centering
		\includegraphics[width=0.9\columnwidth]{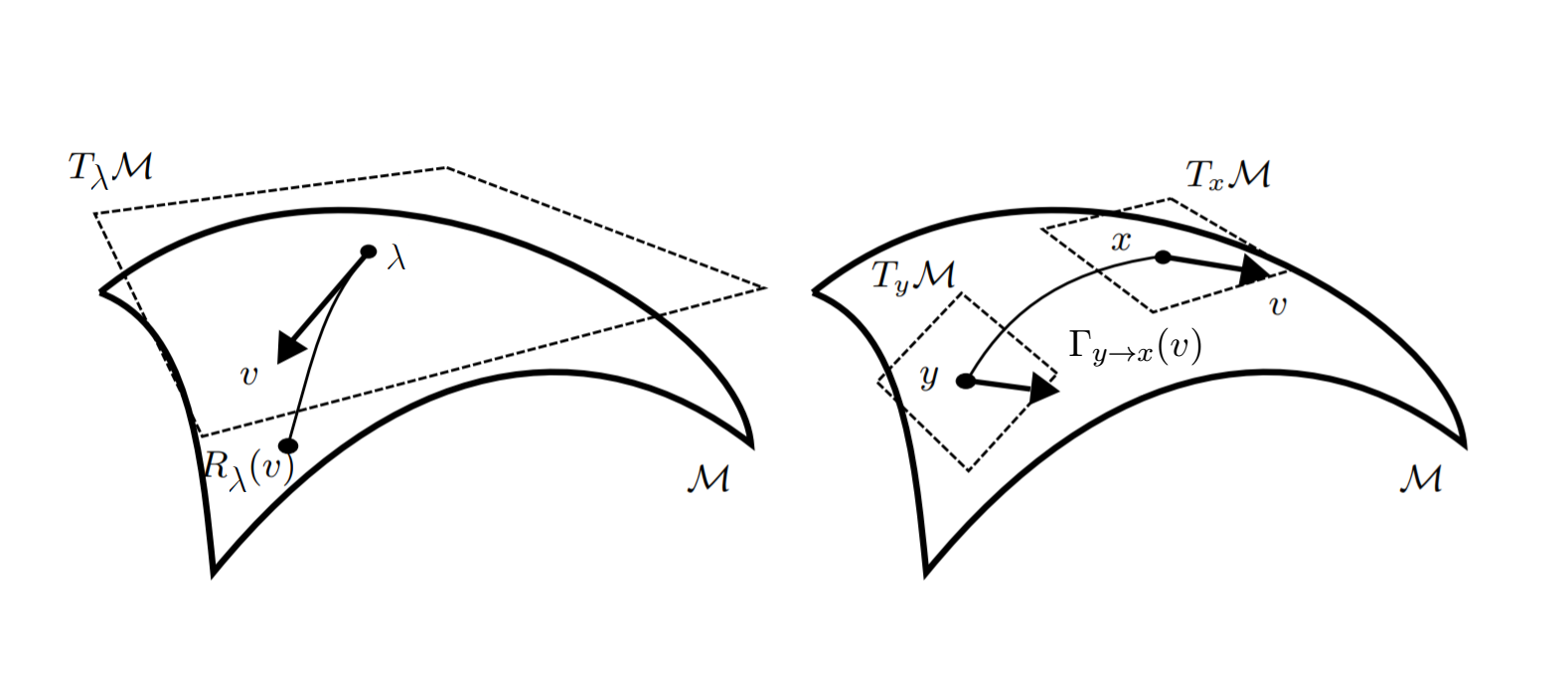}
		\caption{Left: Tangent space at $\lambda$ and the retraction map. Right: Vector transport}\label{f:retractionandvectortransport}
	\end{figure}
  
\subsection{Momentum}\label{subsec: Momentum}
The momentum method,
which uses a moving average of the gradient vectors at the previous iterates to accelerate convergence and also help reduce noise in the estimated gradient,
is widely used in Euclidean-based stochastic gradient optimization.
Extending the momentum method to manifolds requires {\it parallel translation},
a tool in differential geometry for moving tangent vectors from one tangent space to another,
while still preserving the length and angle (to some fixed direction) of the original tangent vectors.
Similar to exponential map, a parallel translation is often approximated by a {\it vector transport} which is much easier to compute; see \cite[Chapter 8]{absil2009optimization} for a formal definition. See Figure \ref{f:retractionandvectortransport} (Right) for a visualization. 
Let $\Gamma_{\lambda_t\to\lambda_{t+1}}(\xi_{\lambda_t})$ denote the vector transport of tangent vector $\xi_{\lambda_t}\in T_{\lambda_t}\M$ to tangent space $T_{\lambda_{t+1}}\M$.
A simple vector transport is the projection of  $\xi_{\lambda_t}$ on $T_{\lambda_{t+1}}\M$, i.e. $\Gamma_{\lambda_t\to\lambda_{t+1}}(\xi_{\lambda_t})=\text{Proj}_{T_{\lambda_{t+1}}\M}(\xi_{\lambda_t})$. 
\cite{Roy2017} is the first to use the momentum method in Riemannian SGD, but they do not provide any convergence analysis.
 
\subsection{Manifold VB algorithm}\label{subsec: Manifold VB algorithm}
The pseudo-code in Algorithm \ref{Al:main algorithm} summarizes our VB algorithm on manifolds.
We use the ``hat'' notation in $\widehat{\nabla_{\lambda}^\text{nat}\mathcal L(\lambda)}$ to emphasize that the natural gradient is obtained from a noisy and unbiased estimator $\widehat{\nabla_{\lambda}\mathcal L(\lambda)}$ of the Euclidean gradient as we often don't have access to the exact ${\nabla_{\lambda}\mathcal L(\lambda)}$.
 
\begin{algorithm}[H]
\SetAlgoLined
\KwIn{Learning rate $\varepsilon$, momentum weight $\omega$, and a lower bound function $\mathcal L(\lambda)$ on a manifold $\M$.}
\KwOut{A local mode $\lambda^*$ of $\mathcal L(\lambda)$.}
 Initialization: $\lambda_0\in\M$, $Y_0\in T_{\lambda_0}\M$\;
 $t=0$, \texttt{stop=false}\;
 \While{not stop}{
 $\lambda_{t+1}=R_{\lambda_t}(\varepsilon Y_t)$\tcc*[r]{retraction}
 \eIf{stopping rule is met}{
 \texttt{stop=true};
 }{
  Compute the natural gradient estimate $\widehat{\nabla_{\lambda}^\text{nat}\mathcal L(\lambda_{t+1})}$\;
  Compute the Riemannian gradient $\text{grad}\mathcal L(\lambda_{t+1})$\;
  $Y_{t+1}:=\omega \Gamma_{\lambda_t\to\lambda_{t+1}}(Y_t)+(1-\omega) \text{grad}\mathcal L(\lambda_{t+1})$\tcc*[r]{steepest ascent direction with momentum}  
  $t := t+1$;
 }
 }
\caption{VB on Manifold Algorithm}\label{Al:main algorithm}
\end{algorithm}
That is, starting from an initial $\lambda_0$, the initial momentum gradient $Y_0$ can be found by projecting 
the natural gradient estimate $\widehat{\nabla_{\lambda}^\text{nat}\mathcal L(\lambda_0)}$ on $T_{\lambda_0}\M$.
The relevant projections were described in Sections \ref{subsec:Riemannian submanifolds} and \ref{subsec:Quotient manifolds}.
At a step $t$, from $\lambda_t$, we move on the tangent space $T_{\lambda_t}\M$ along the direction $Y_t$ to find the next iterate $\lambda_{t+t}$ by retraction, $\lambda_{t+1}=R_{\lambda_t}(\varepsilon Y_t)$.
Then, we calculate the natural gradient estimate $\widehat{\nabla_{\lambda}^\text{nat}\mathcal L(\lambda_{t+1})}$ at $\lambda_{t+1}$, whose projection on $T_{\lambda_{t+1}}\M$ gives 
the Riemannian gradient $\text{grad}\mathcal L(\lambda_{t+1})$.
Finally, the new momentum gradient $Y_{t+1}$ is calculated from the vector transport of $Y_t$ (to $T_{\lambda_{t+1}}\M$) and $\text{grad}\mathcal L(\lambda_{t+1})$.

\section{Convergence analysis}\label{sec: optimisation 1}
To be consistent with the standard notation in the optimization literature,
and with an abuse of notation, let us define the cost function as $\mathcal L (\cdot):=- \mathcal L(\cdot)$.
That is, our optimization problem is
\begin{align}
\label{Optimization Problem}
\argmin_{\lambda\in\mathcal M}\mathcal L(\lambda).
\end{align}
In this section, for notational simplicity, we will denote by 
$\nabla \mathcal L(\lambda)$ the Riemannian gradient of the {\it cost function} $\mathcal L$,
and by $\widehat{\nabla \mathcal L}(\lambda)$ its unbiased estimator.
Let $\{\lambda_t,t\leq 0\}$ be the iterates from Algorithm \ref{Al:main algorithm}, and $\mathcal F_t$ be the $\sigma$-field generated by $\{\lambda_s,s\leq t\}$.
Because of the unbiasedness, we can write
\[\widehat{\nabla_\lambda{\mathcal L}}(\lambda_t)=\nabla_\lambda{\mathcal L}(\lambda_t)+\Delta M_t,\]
with $\{\Delta M_t\}$ a martingale difference w.r.t. $\{\mathcal F_t\}$, i.e. $\E(\Delta M_{t+1}|\mathcal F_t)=0$.
For the purpose of convergence analysis, we write our manifold VB algorithm as follows
\begin{equation}
\left\{
\label{equa:SDG_11}
\begin{aligned}
\lambda_0\in&\mathcal M,\; Y_0\in T_{\lambda_0}\M\\
\lambda_{t+1}=&R_{\lambda_t}(-Y_t),\forall 0\leq t\in\mathbb N\\
Y_{t+1}=&\zeta \Gamma_{\lambda_t\to \lambda_{t+1}}(Y_t)+\gamma(\nabla \mathcal L(\lambda_{t+1})+\Delta M_{t+1}),\quad \zeta,\gamma \in (0,1),  0\leq t\in\mathbb N.
\end{aligned}
\right.
\end{equation}
The minus sign in $R_{\lambda_t}(-Y_t)$ results from the change in the notation $\mathcal L (\cdot):=- \mathcal L(\cdot)$ above,
and with the suitable choice of $\zeta$ and $\gamma$ we can recover Algorithm \ref{Al:main algorithm}.

\indent We next need some definitions.

\begin{deff}\cite[Section 3.3]{huang2015riemannian}
\label{Defition:upper-Hessian}
A neighborhood $\Sc\subset\M$ of $x^*$ is said to be totally retractive
if  there is $\delta>0$ such that for any $y\in\Sc$, $R_y(B(0_y,\delta))\supset\Sc$ and $R_y$ is a diffeomorphism on $B(0_y,\delta)$, where $B(0_y,\delta)$ is the ball of radius $\delta$ in $T_y\mathcal M$ centered at the origin $0_y$.
\end{deff}
\begin{deff}\cite[Definition 3.1]{huang2015broyden}
For a function $f:\M\mapsto\R$ on a Riemannian
manifold $\M$ with retraction $R$, define $m_{\lambda,\eta}(t) = f(R_\lambda(t\eta))$ for $\lambda\in\M$, $\eta\in T_\lambda\M$.
The function f is retraction-convex with respect to the retraction $R$ in a set $\Sc$ if for all
$\lambda\in\M$, $\eta\in T_\lambda\M, |\eta|=1$, $m_{\lambda,\eta}(t)$ is convex for all t which satisfy $R_{\lambda}(s\eta)\in \Sc$ 
for all $s\in[0,t]$. Moreover, $f$ is strongly retraction-convex in $\Sc$ if $m_{\lambda,\eta}(t)$ is strongly
convex, that is
$\tilde a_0\leq \dfrac{d^2m_{\lambda,\eta}}{dt^2}(t)\leq \tilde a_1$
for some positive constants $\tilde a_1, \tilde a_0$.
\end{deff}
\begin{rmk}\label{rm1}
{\rm
If $m_{\lambda,\eta}(t)$ is strongly convex with 
$\tilde a_0\leq \dfrac{d^2m_{\lambda,\eta}}{dt^2}(t)\leq \tilde a_1$ then
$$f(\lambda)-f(\lambda^*)\leq \frac{\tilde a_0}2 \left| \dfrac{dm_{\lambda^*,\eta}}{dt}(t)\right|^2
$$
where $\lambda=R_{\lambda^*}(t\eta).$
If we assume that $f$  is strongly retraction-convex in $\Sc$
and for any $\tilde\lambda\in\Sc$, there exists $\tilde\eta$ such that $R_{\lambda^*}(\tilde\eta)=\tilde\lambda$
and the derivative $D R_{\lambda}(\eta)$ is bounded, then the chain rule implies
$| \dfrac{dm_{\lambda^*,\eta}}{dt}(t)|\leq c |\nabla f(\lambda)|$
with $\lambda=R_{\lambda^*}(t\eta).$
As a result,
$$f(\lambda)-f(\lambda^*)\leq \frac{\tilde a_0c^2}2 |\nabla f(\lambda)|^2, \lambda\in\M.$$
}
\end{rmk}
In Theorem \ref{main-Theorem}, we show the convergence of \eqref{equa:SDG_11} under suitable conditions imposed on the objective function $\mathcal L$. It is worth emphasizing that the convergence analysis in this section is done
in a general setting for Riemannian SGD with momentum
rather than restricting on the VB problem in the previous sections.
It can therefore be applied to more general settings.

\begin{thm}\label{main-Theorem} 
\begin{itemize}
Assume that 
\item There exists a totally retractive neighborhood of $\lambda^*$, $\Sc$, such that $\lambda_t\in \Sc$ for any $t\geq 0$.
\item $\nabla \mathcal L$ and $\Delta M_{t}$  are bounded such that $|\nabla \mathcal L|+|\Delta M|\leq b_{\mathcal L}$ almost surely for some constant $b_{\mathcal L}>0$.
\item $\nabla\mathcal L(\lambda)$ is $\tilde L$-Lipschitz with respect to retraction $R$, that is, $|\nabla \mathcal L(R_\lambda(\eta))-\nabla\mathcal L(\lambda)|\leq  \tilde L|\eta|$
for $\lambda\in\Sc$, $\eta\in T_\lambda\mathcal M$.
 \end{itemize}
\noindent Consider the sequence $(\lambda_t)_{t\in\mathbb N}$ obtained from \eqref{equa:SDG_11} using $\gamma=\frac{1}{\sqrt T}$. The following holds true:
\begin{equation}
\min_{t\in [1,T]}\E |\nabla \mathcal L(\lambda_{t+1})|^2\leq  \frac{C}{\sqrt{T}}, \quad\text{for some}\quad C>0.
\end{equation}
Moreover, when the objective function $\mathcal L$ is strongly retraction-convex, for $\epsilon\in (0,1)$, by choosing $\gamma=\frac{1}{ T^{\epsilon}}$,  there exists a constant $C_{\epsilon}$ such that
$$\E (\lambda_T-\lambda^*)^2\leq C_\eps T^{2\eps-2}.$$
\end{thm}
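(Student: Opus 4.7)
The overall strategy is to mimic the standard Euclidean convergence analysis of SGD with heavy-ball momentum, while carefully tracking the geometric objects introduced by retraction and vector transport. The retraction-Lipschitz hypothesis plays the role of the ``descent lemma'', and all telescoping / contraction arguments are done on the cost values $\mathcal L(\lambda_t)$ rather than on coordinates.

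\emph{Step 1: a Riemannian descent inequality.} My first move is to prove a retraction-based quadratic upper bound of the form
\begin{equation*}
\mathcal L(R_\lambda(\eta))\;\leq\;\mathcal L(\lambda)+\langle \nabla\mathcal L(\lambda),\eta\rangle+\tfrac{\tilde L}{2}|\eta|^2,
\qquad \lambda\in\Sc,\ \eta\in T_\lambda\M,
\end{equation*}
which is the standard consequence of $\tilde L$-Lipschitzness of $\nabla\mathcal L$ with respect to $R$ (integrate $\tfrac{d}{dt}\mathcal L(R_\lambda(t\eta))$ and use the Lipschitz bound). Applied at $(\lambda_t,-Y_t)$ this yields
\begin{equation*}
\mathcal L(\lambda_{t+1})\;\leq\;\mathcal L(\lambda_t)-\langle\nabla\mathcal L(\lambda_t),Y_t\rangle+\tfrac{\tilde L}{2}|Y_t|^2 .
\end{equation*}

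\emph{Step 2: controlling the momentum buffer.} I would unroll the recursion for $Y_t$ as a $\zeta$-weighted sum of transported past stochastic gradients: $Y_t=\gamma\sum_{s=1}^{t}\zeta^{t-s}\,\Gamma_{\lambda_s\to\lambda_t}\!\bigl(\nabla\mathcal L(\lambda_s)+\Delta M_s\bigr)$. Using the isometry of $\Gamma$ and the a.s. bound $|\nabla\mathcal L|+|\Delta M|\le b_{\mathcal L}$, I obtain $|Y_t|\le \gamma b_{\mathcal L}/(1-\zeta)$. Conditionally on $\mathcal F_t$, the $\Delta M_{t+1}$ component inside $Y_t$ contributes zero mean under $\E[\langle\nabla\mathcal L(\lambda_t),\cdot\rangle\mid\mathcal F_t]$ only for the latest increment; the older martingale terms are already $\mathcal F_t$-measurable and I absorb them into an error tracked below.

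\emph{Step 3: telescoping for the non-convex rate.} Taking expectations in Step 1 and summing over $t=1,\dots,T$ gives a bound of the shape
\begin{equation*}
\sum_{t=1}^{T}\gamma\,\E|\nabla\mathcal L(\lambda_{t+1})|^2
\;\leq\; \E[\mathcal L(\lambda_1)-\mathcal L(\lambda_{T+1})]+C_1 T\gamma^2+C_2\gamma\sum_{t=1}^{T}\E\|Y_t-\gamma\nabla\mathcal L(\lambda_{t+1})\|,
\end{equation*}
where the last term measures the ``momentum misalignment'' and is controlled by a geometric series in $\zeta$ (and hence $O(T\gamma^2)$ after one more summation). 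Because $\mathcal L$ is bounded on $\Sc$, the left-hand side is $O(1)+O(T\gamma^2)$, and dividing by $T\gamma$ yields $\min_{t\le T}\E|\nabla\mathcal L(\lambda_{t+1})|^2\le O(1/(T\gamma))+O(\gamma)$. The choice $\gamma=1/\sqrt T$ balances the two terms and gives the advertised $\mathcal O(1/\sqrt T)$ rate.

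\emph{Step 4: strongly retraction-convex case.} Here I invoke the Polyak--\L{}ojasiewicz-type inequality from Remark~\ref{rm1}: $\mathcal L(\lambda)-\mathcal L(\lambda^*)\le c_{PL}|\nabla\mathcal L(\lambda)|^2$. Feeding this into the descent inequality of Step~1 (after taking conditional expectation) gives a one-step contraction
\begin{equation*}
\E[\mathcal L(\lambda_{t+1})-\mathcal L(\lambda^*)\mid\mathcal F_t]\;\leq\;(1-c\gamma)\bigl(\mathcal L(\lambda_t)-\mathcal L(\lambda^*)\bigr)+C_3\gamma^2,
\end{equation*}
plus a residual of order $\gamma^2$ arising from the momentum buffer. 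Iterating this recursion yields $\E[\mathcal L(\lambda_T)-\mathcal L(\lambda^*)]\le (1-c\gamma)^{T}\Delta_0+C_3\gamma/c$, and the choice $\gamma=T^{-\epsilon}$ makes the exponential term negligible while the variance term is converted to a squared-distance bound via the lower Hessian bound $\tilde a_0$; careful tracking of the momentum contribution (which requires an additional factor of $1/(1-\zeta)^2$ from summing transported increments) delivers the claimed $C_\epsilon T^{2\epsilon-2}$ rate.

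\emph{Main obstacle.} The genuine difficulty is not the PL or descent step, both of which are standard once the retraction-Lipschitz condition is granted; it is keeping the momentum update $Y_{t+1}=\zeta\,\Gamma_{\lambda_t\to\lambda_{t+1}}(Y_t)+\gamma(\nabla\mathcal L(\lambda_{t+1})+\Delta M_{t+1})$ compatible with the descent inequality across different tangent spaces. Vector transport is only an approximation to parallel translation, so the correlation between $Y_t$ and $\nabla\mathcal L(\lambda_t)$ acquires a curvature-dependent error that must be shown to be $O(\gamma^2)$ uniformly on $\Sc$. I expect this bookkeeping---rather than the PL or summation steps---to consume most of the technical effort, and it is where the totally-retractive neighborhood assumption enters decisively.
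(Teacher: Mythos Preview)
Your high-level outline matches the paper's, and you correctly flag in your ``Main obstacle'' paragraph that the vector-transport bookkeeping is where the difficulty lives. However, two concrete points in your argument would not go through as written, and the paper's actual mechanism for handling them is different from what you sketch.

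First, in Step~2 you write ``using the isometry of $\Gamma$'' to bound $|Y_t|$. Vector transport is \emph{not} an isometry; only parallel translation $P$ is. The paper's argument (borrowed from Huang et~al.) is that $|\Gamma_{\lambda_t\to\lambda_{t+1}}(Y_t)-P_{\lambda_t\to\lambda_{t+1}}(Y_t)|\le a_1|Y_t|^2$, so $|\Gamma_{\lambda_t\to\lambda_{t+1}}(Y_t)|\le(1+a_1|Y_t|)|Y_t|$. The bound $|Y_t|\le 2\gamma b_{\mathcal L}/(1-\zeta)$ then follows by induction, but only under a smallness constraint on $\gamma$ coupling $\zeta$, $a_1$, and $b_{\mathcal L}$; it is not a pure geometric-series bound. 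Relatedly, your unrolled formula $Y_t=\gamma\sum_{s}\zeta^{t-s}\Gamma_{\lambda_s\to\lambda_t}(\cdots)$ presupposes that transports compose, which they do not; the recursion only gives a composition $\Gamma_{\lambda_{t-1}\to\lambda_t}\circ\cdots\circ\Gamma_{\lambda_s\to\lambda_{s+1}}$, and this is not a single $\Gamma_{\lambda_s\to\lambda_t}$.

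Second, and more substantially, your Step~3 telescoping needs to produce $\sum_t\gamma\,\E|\nabla\mathcal L(\lambda_{t+1})|^2$ on the left, but the descent inequality only gives you $\langle\nabla\mathcal L(\lambda_t),Y_t\rangle$, and $Y_t$ is not close to $\gamma\nabla\mathcal L(\lambda_t)$ in any direct sense when $\zeta$ is fixed. The paper does \emph{not} try to bound a ``misalignment'' term directly. Instead it builds a discrete Lyapunov function
\[
V_t=\mathcal L(\lambda_t)+A|Y_t|^2-\tfrac{1}{1-\zeta}\langle Y_t,\nabla\mathcal L(\lambda_t)\rangle
\]
by deriving separate one-step recursions for $|Y_{t+1}|^2$ and for $\langle Y_{t+1},\nabla\mathcal L(\lambda_{t+1})\rangle$ (the latter via $P^{-1}_{\lambda_t\to\lambda_{t+1}}\nabla\mathcal L(\lambda_{t+1})=\nabla\mathcal L(\lambda_t)+O(|Y_t|)$), and then taking a linear combination with weights $1$, $A$, $-\tfrac{1}{1-\zeta}$ chosen so that the $\langle Y_t,\nabla\mathcal L(\lambda_t)\rangle$ cross-terms cancel. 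This is what manufactures the clean $-\tfrac{\gamma}{1-\zeta}|\nabla\mathcal L(\lambda_{t+1})|^2$ term that can be summed. In the strongly retraction-convex case the same three-term Lyapunov (with $1-\zeta$ replaced by $\rho-\zeta$, $\rho=1-\tfrac{\gamma}{4\tilde\mu(1-\zeta)}$) is what yields the contraction $\E V_{t+1}\le\rho\,\E V_t+C\gamma^2$; a one-step contraction on $\mathcal L(\lambda_t)-\mathcal L(\lambda^*)$ alone, as you write in Step~4, is not available because the momentum state $Y_t$ carries information across steps.
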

The proof can be found in the Appendix. 
We note that
 the first assumption in Theorem \ref{main-Theorem}  is standard, see, for example \cite{huang2015riemannian}.
 The condition $|\nabla \mathcal L|+|\Delta M|\leq b_{\mathcal L}$ is
 essential to make sure that $Y_t$ stays bounded.
 That is,  it does not diverge to infinity, which is the key
 property of the algorithm.
 While the Lipschitz property of $\nabla\mathcal L(\lambda)$
 guarantee that we can expand $\mathcal L$ using Taylor expansion.
 The assumption related to the martingale difference $\Delta M_{t}$ is justified by the fact that the estimator $\widehat{\nabla\mathcal L}$ is unbiased.
 
\section{Applications}\label{sec: applications}
\subsection{Manifold Gaussian VB}\label{sec: Gaussian VB}
Gaussian VB (GVB) uses a multivariate Gaussian distribution $N(\mu,\Sigma)$ for the VB approximation $q_\lambda$, $\lambda=(\mu,\text{vec}(\Sigma))$.
GVB has been extensively used in the literature, often with some simplifications imposed on $\Sigma$; e.g., $\Sigma$ is a diagonal matrix $\diag(\sigma_1^2,...,\sigma_d^2)$ or has a factor structure $\Sigma=BB^\top+D^2$.
One of the reasons of imposing these simplifications is to deal with the symmetric and positive definiteness constraints on $\Sigma$.
We do not impose any simplifications on $\Sigma$, and deal with these constraints by
considering the VB optimization problem on the manifold $\M$ of symmetric and positive definite matrices $\M=\{\Sigma\in\text{Mat}(d,d):\Sigma=\Sigma^\top,\;\;\Sigma>0\}$.
We can think of $\M$ as embedded in $\bar\M=\mathbb{R}^{d^2}$.

From \eqref{eq:lower bound}, the gradient of lower bound is \citep{Tran:2021tutorial}
\[\nabla_\lambda \mathcal L(\lambda)=\E_{q_\lambda}\left[\nabla_\lambda\log q_\lambda(\theta)\times h_\lambda(\theta) \right],\;\;h(\theta):=\log\frac{p(\theta)p(y|\theta)}{q_\lambda(\theta)}.\]
Here, we use the so-called {\it score-function} VB as in \cite{tran2017variational},
which does not require the gradient of the log-likelihood.
We follow \cite{tran2017variational} and use a control variate for the gradient of lower bound
\begin{equation}\label{eq:grad LB}
\nabla_\lambda \mathcal L(\lambda)=\E_{q_\lambda}\left[\nabla_\lambda\log q_\lambda(\theta)\times h(\theta)\right]=\E_{q_\lambda}\left[\nabla_\lambda\log q_\lambda(\theta)\times \big(h(\theta)-c\big)\right],
\end{equation}
where $c$ is a vector selected to minimize the variance of the gradient estimate
\[c_i=\frac{\text{cov}(\nabla_{\lambda_i}\log q_{\lambda}(\theta),\nabla_{\lambda_i}\log q_\lambda(\theta)\times h(\theta))}{\Var(\nabla_{\lambda_i}\log q_{\lambda}(\theta))},\;\;i=1,...,d_\lambda,\]
with $d_\lambda$ the size of $\lambda$, which can be estimated by sampling from $q_{\lambda}$.

\cite{10.2307/2336405} show that the Fisher information matrix for the multivariate Gaussian distribution $N(\mu,\Sigma)$ is
\[I_F(\lambda)=\begin{pmatrix}\Sigma^{-1}&0\\
0&I_F(\Sigma)\end{pmatrix}\]
where $I_F(\Sigma)$ is an $d^2\times d^2$ matrix with entries
\[\big(I_F(\Sigma)\big)_{\sigma_{ij},\sigma_{kl}}=\frac12\text{tr}\left(\Sigma^{-1}\frac{\partial\Sigma}{\partial\sigma_{ij}}\Sigma^{-1}\frac{\partial\Sigma}{\partial\sigma_{kl}}\right).\]
One can derive that $I_F(\Sigma)\approx \Sigma^{-1}\otimes\Sigma^{-1}$, with $\otimes$ the Kronecker product.
Therefore
\begin{equation}\label{eq:Fisher matrix normal}
I_F(\lambda)^{-1}\approx\begin{pmatrix}\Sigma&0\\
0&\Sigma\otimes\Sigma\end{pmatrix},
\end{equation}
which gives a convenient form for obtaining an approximate natural gradient.
The natural gradient w.r.t. $\mu$ and $\Sigma$ is approximated as 
\begin{align}
\nabla_\mu^\text{nat}\mathcal L(\lambda)&=\Sigma\nabla_\mu\mathcal L(\lambda),\\
\nabla_{\Sigma}^\text{nat}\mathcal L(\lambda)&=\text{vec}^{-1}\Big((\Sigma\otimes\Sigma)\nabla_{\text{vec}(\Sigma)}\mathcal L(\lambda)\Big)=\Sigma \nabla_{\Sigma}\mathcal L(\lambda)\Sigma.\label{eq: nat Sigma grad}
\end{align}
As $T_\Sigma\M=\text{Mat}(d,d)\cong T_\Sigma\bar\M\cong \mathbb{R}^{d^2}$,
the projection in \eqref{eq:projection} is the identity, 
hence $\nabla_{\Sigma}^\text{nat}\mathcal L(\lambda)$ is the Riemannian gradient of lower bound $\mathcal L$ w.r.t. $\Sigma$. 
The Manifold GVB algorithm is outlined in Algorithm \ref{Manifold Gaussian VB}. 

\begin{algorithm}[H]
\TitleOfAlgo{Manifold Gaussian VB}
\SetAlgoLined
\KwIn{Learning rate $\varepsilon$, momentum weight $\omega$, prior $p(\theta)$ and likelihood $p(y|\theta)$}
\KwOut{An estimate $\mu$ and $\Sigma$}
 Initialization: $\mu=\mu_0$ and $\Sigma=\Sigma_0$\;
 Compute gradient estimates $\widehat{\nabla_\mu\mathcal L(\lambda)}$, $\widehat{\nabla_{\Sigma}\mathcal L(\lambda)}$\;
 Compute natural gradients $\widehat{\nabla_\mu^\text{nat}\mathcal L(\lambda)}=\Sigma\widehat{\nabla_\mu\mathcal L(\lambda)}$ and $\widehat{\nabla_{\Sigma}^\text{nat}\mathcal L(\lambda)}=\Sigma\widehat{ \nabla_{\Sigma}\mathcal L(\lambda)}\Sigma$\; 
 Initialize the momentum: $m_\mu=\widehat{\nabla_\mu^\text{nat}\mathcal L(\lambda)}$ and $m_\Sigma=\widehat{\nabla_{\Sigma}^\text{nat}\mathcal L(\lambda)}$\;  
\texttt{stop=false}\;
 \While{not stop}{
 $\mu=\mu+\varepsilon m_\mu$\tcc*[r]{update $\mu$}
 $\Sigma_\text{old}=\Sigma$, $\Sigma=\text{R}_{\Sigma}(\varepsilon m_\Sigma )$\tcc*[r]{update $\Sigma$}
 \eIf{stopping rule is met}{
 \texttt{stop=true};
 }{
 Compute gradient estimates $\widehat{\nabla_\mu\mathcal L(\lambda)}$, $\widehat{\nabla_{\Sigma}\mathcal L(\lambda)}$\;
 Compute natural gradient estimates $\widehat{\nabla_\mu^\text{nat}\mathcal L(\lambda)}=\Sigma\widehat{\nabla_\mu\mathcal L(\lambda)}$ and $\widehat{\nabla_{\Sigma}^\text{nat}\mathcal L(\lambda)}=\Sigma\widehat{ \nabla_{\Sigma}\mathcal L(\lambda)}\Sigma$\; 
 Compute the momentum: $m_\mu=\omega m_\mu+(1-\omega)\widehat{\nabla_\mu^\text{nat}\mathcal L(\lambda)}$, $m_\Sigma=\omega\Gamma_{\Sigma_\text{old}\to\Sigma}(m_\Sigma)+(1-\omega)\widehat{\nabla_\Sigma^\text{nat}\mathcal L(\lambda)}$ \;
 }
 }
\caption{Manifold Gaussian VB.}\label{Manifold Gaussian VB}
\end{algorithm}
One of the most popular retractions used for the manifold $\M$ of symmetric and positive definite matrices
is (see the Manopt toolbox of \cite{manopt}) 
\begin{equation}\label{eq: Sigma retraction}
\text{R}_\Sigma(\xi)=\Sigma+\xi+\frac12\xi\Sigma^{-1}\xi,\;\;\xi\in T_\Sigma\M
\end{equation}
and vector transport
\begin{equation}\label{eq: Sigma transport}
\Gamma_{\Sigma_1\to\Sigma_2}(\xi)=E\xi E^\top,\;\;E=(\Sigma_2\Sigma_1^{-1})^{1/2},\;\;\xi\in T_{\Sigma_1}\M.
\end{equation}
The Matlab code implementing the Manifold GVB algorithm is made available online at \textbf{\texttt{https://github.com/VBayesLab.} }

\subsubsection*{Numerical experiments}
We apply the Manifold GVB algorithm to fitting a logistic regression model using the German Credit dataset.
This dataset, available on the UCI Machine Learning Repository \texttt{https://archive.ics.uci.edu/ml/index.php},
 consists of observations on 1000 customers, each was already rated as being ``good credit'' (700 cases) or ``bad credit''
(300 cases). The covariates include credit history, education, employment status, etc. and lead to totally 25 predictors after using dummy variables to represent the categorical covariates. 

A naive GVB implementation is to only update $\Sigma$ when its updated value satisfies the symmetric and positive definiteness constraint.
This naive implementation didn't work at all in this example.
To see the usefulness of incorporating the natural gradient into the Manifold GVB,
we compare Algorithm \ref{Manifold Gaussian VB} with a version without using the natural gradient.
As shown in Figure \ref{f:MGVB_with_and_without_natgrad}, using the natural gradient leads to a much faster and more stable convergence.
Also, the Manifold GVB without the natural gradient requires a large number of samples used in estimating the gradient \eqref{eq:grad LB} (we used $S=10,000$),
compared to $S=100$ for the Manifold GVB with the natural gradient.
The CPU running time for the Manifold GVB algorithms with and without the natural gradient is 43 and 280 seconds, respectively.
	\begin{figure}[ht]
		\centering
		\includegraphics[width=0.9\columnwidth]{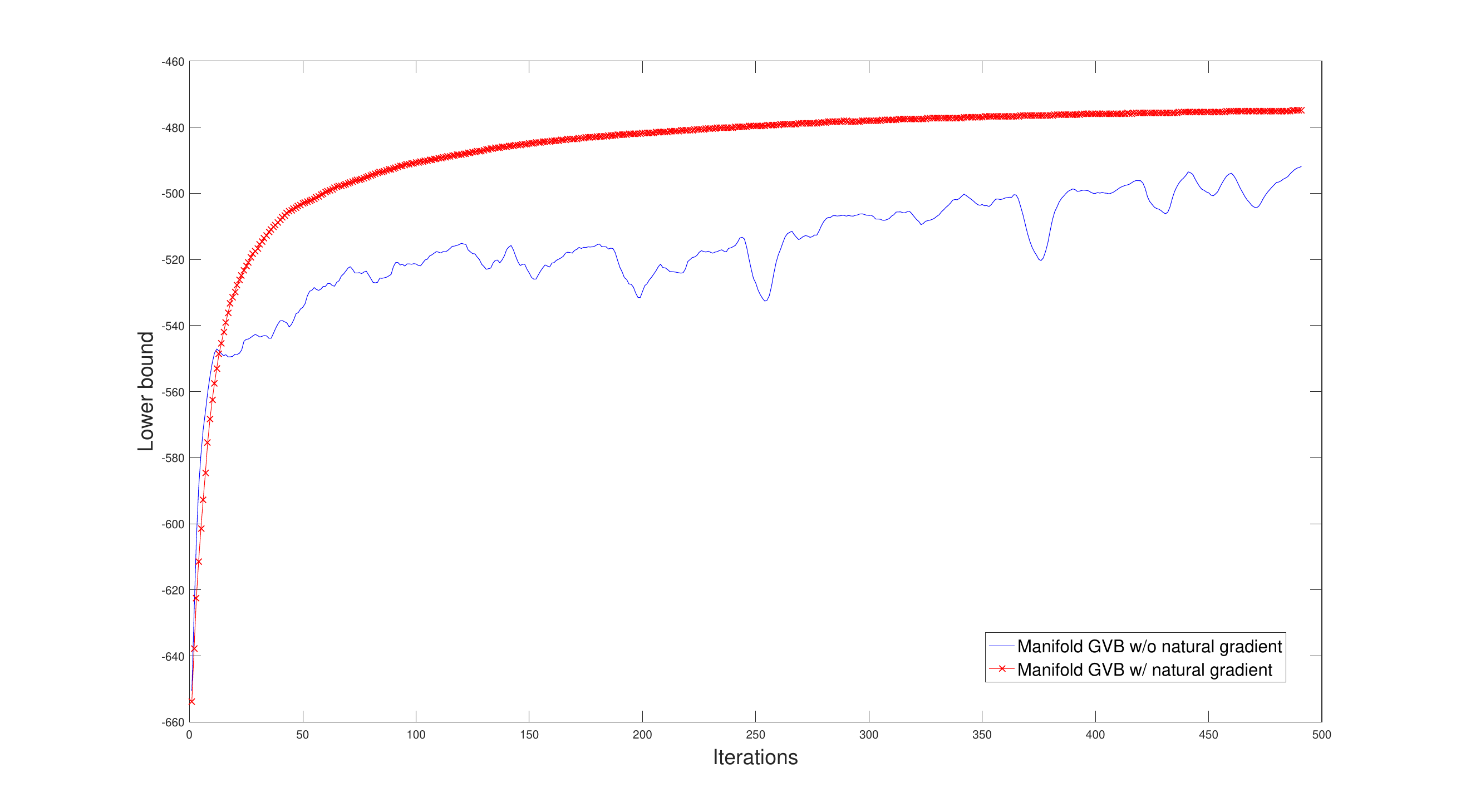}
		\caption{Plots of lower bound over iteration for the Manifold GVB algorithms with (cross red) and without (solid blue) using the natural gradient. Both algorithms were run for 500 iterations. The lower bound estimates have been smoothened by a moving avarage with a window of size 10.}\label{f:MGVB_with_and_without_natgrad}
	\end{figure}

\cite{Tran:2019} develop a Gaussian VB algorithm where $\Sigma$ is factorized as $\Sigma=BB^\top+D^2$ with $B$ a vector,
and term their algorithm NAGVAC. 
Figure \ref{f:NAGVAC_vs_MGVB} plots the lower bound estimates of the Manifold GVB and NAGVAC.  
Manifold GVB stopped after 921 iterations and NAGVAC stopped after 1280 iterations, and their CPU running times are 19 seconds and 12 seconds, respectively.
As shown, the Manifold GVB algorithm converges quicker than NAGVAC and obtains a larger lower bound.
We note, however, that NAGVAC is less computational demanding than Manifold GVB in high-dimensional settings such as deep neural networks.

To assess the training stability of the Manifold GVB and NAGVAC algorithms, 
we use the same initialization $\mu_0$ and $\Sigma_0$ for both algorithms and run each for 20 different replications.
The standard deviations of the estimates of $\mu$ (across the different runs, then averaged over the 25 coordinates) for NAGVAC and Manifold GVB are 0.03 and 0.01, respectively.
This demonstrates that the Manifold GVB algorithm is more stable than NAGVAC.
To assess their sensitivity to the initialization, in each algorithm, we now use a random initialization but fix the random seed in the updating stage.
The standard deviations of the estimates of $\mu$ (across the different runs, then averaged over the 25 coordinates) for NAGVAC and Manifold GVB are 0.0074 and 0.0009, respectively.
This demonstrates that the Manifold GVB algorithm is less sensitive to the initialization than NAGVAC.

	\begin{figure}[ht]
		\centering
		\includegraphics[width=0.9\columnwidth]{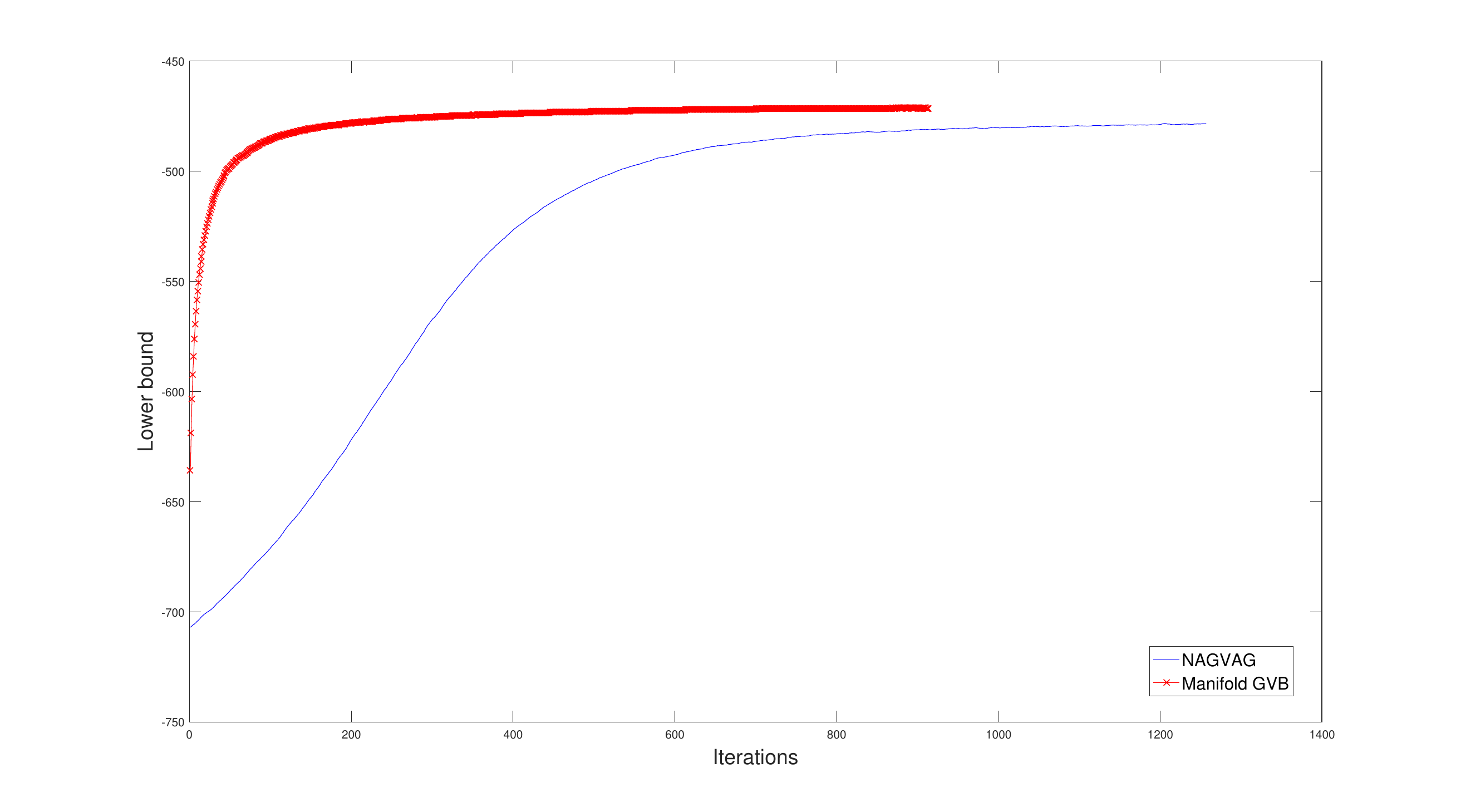}
		\caption{Lower bound plots for the Manifold GVB algorithm (cross red) and the NAGVAC algorithm (solid blue). The lower bounds have been smoothened by a moving average with a window of size 10.}\label{f:NAGVAC_vs_MGVB}
	\end{figure}

\subsubsection*{Application to financial time series data}\label{sec: financial time series}
This section applies the MGVB method to analyze a financial stock return data set, and compares MGVB to Sequential Monte Carlo (SMC).
We consider the GARCH model of \cite{Bollerslev1986} for modelling the underlying volatility dynamics in the Standard \& Poor 500 stock indices observed from 
4 Jan 1988 to 26 Feb 2007 (1000 observations). Let $\{y_t, t = 1,...,n\}$ be the stock returns. We consider the following GARCH model 
\begin{align*}
y_t&=\sigma_t\eps_t,\;\;\;\eps_t\sim N(0,1)\\
\sigma_t^2&=w+\alpha\sigma_{t-1}^2+\beta y_{t-1}^2,\;\;t=1,2,...n.
\end{align*}
The parameters $\theta$ are $w>0$, $\alpha>0$ and $\beta>0$, with the constraint $\alpha+\beta<1$ to ensure the stationarity.
To impose this constraint, we parameterize $\alpha$ and $\beta$ as $\alpha=\psi_1(1-\psi_2)$ and $\beta=\psi_1\psi_2$ with $0<\psi_1,\psi_2<1$.
We use an inverse Gamma prior IG$(1,1)$ for $w$ and an uniform prior $U(0,1)$ for $\psi_1$ and $\psi_2$.
Finally, we use the following transformation 
\[\theta_w=\log(w),\;\;\theta_{\psi_1}=\log\frac{\psi_1}{1-\psi_1},\;\;\theta_{\psi_2}=\log\frac{\psi_2}{1-\psi_2}\]
and work with the unconstrained parameters $\widetilde\theta=(\theta_w,\theta_{\psi_1},\theta_{\psi_1})$,
but we will report the results in terms of the original parameters $\theta=(w,\alpha,\beta)$. 
 
We compare the MGVB method to SMC.
In this application with only three unknown parameters,
SMC is applicable and can be considered as the ``gold standard'' as it produces an asymptotically exact approximation of the posterior $p(\widetilde\theta|y)$.  
We implement the likelihood annealing SMC method of \cite{Tran:2014}, which is a robust SMC sampling technique that first draws samples from an easily-generated
distribution and then moves these samples via annealing distributions towards the posterior distribution through weighting, resampling and Markov moving.
We run SMC with 10,000 particles and the annealing distributions are adaptively designed such that the effective sample size is always at least 80\%. 
Figure \ref{f:sp500} plots the posterior estimates for the model parameters $\theta$ by SMC and MGVB,
which shows that the MGVB estimates are almost identical to that of SMC.
The CPU running time for MGVB and SMC is 10.4 and 380.5 seconds, respectively. 
\begin{figure}[ht]
		\centering
		\includegraphics[width=0.9\columnwidth]{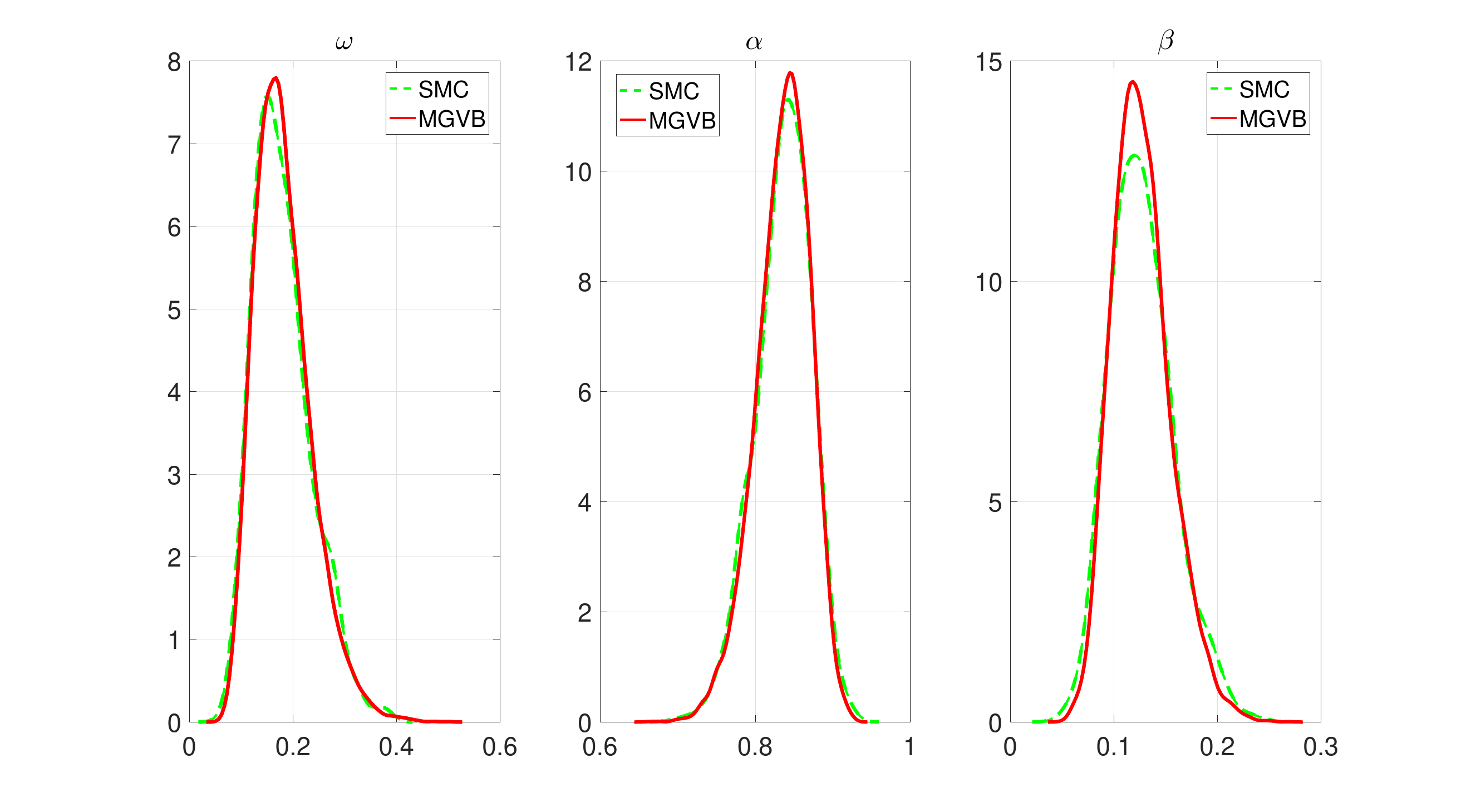}
		\caption{The posterior estimates for the GARCH model parameters by SMC and MGVB.}\label{f:sp500}
	\end{figure}

\subsection{Manifold Wishart Variational Bayes}\label{sec:ChapterFFVB: manifold WVB}
Suppose that we are interested in approximating the posterior distribution of a covariance matrix $V$ of size $d$ by an inverse-Wishart distribution $q_\lambda(V)= \text{inverse-Wishart}(\nu_q,\Sigma_q)$.
The density $q_{\lambda}$ is
\begin{equation*}
q_{\lambda}(V) = \frac{|\Sigma_q|^{\nu_q/2}}{2^{d\nu_q/2}\Gamma_{d}(\nu_q/2)}|V|^{-\frac12(\nu_q+d+1)}\exp\big(-\frac12\text{trace}(\Sigma_qV^{-1})\big),
\end{equation*}
with
\begin{equation*}
\Gamma_d(\nu)\coloneqq\pi^{\frac14d(d-1)}\prod_{j=1}^d\Gamma\big(\nu-\frac{j-1}2\big)
\end{equation*}
the multivariate gamma function. The gradient of $\log q_{\lambda}(V)$ w.r.t. $\nu_q$ and $\Sigma_q$ are
\begin{equation*}
\nabla_{\nu_q}\log q_{\lambda}(V)=\frac12\log|\Sigma_q|-\frac12d\log2-\frac12\psi_{d}\big(\frac{\nu_q}{2}\big)-\frac12\log|V|,
\end{equation*}
and
\begin{equation*}
\nabla_{\Sigma_q}\log q_{\lambda}(V)=\frac12\nu_q\Sigma_q^{-1}-\frac12V^{-1},
\end{equation*}
where $\psi_{d}(\nu)\coloneqq\partial\log\Gamma_{d}(\nu)/\partial\nu$.
From this, it is straightforward to estimate the gradients of the lower bound $\nabla_{\nu_q}\mathcal L(\lambda)$ and $\nabla_{\Sigma_q}\mathcal L(\lambda)$ using the score-function method. 
It is often much more efficient to use
\begin{equation}\label{eq:ChapterFFVB:nat grad wishart Sigma}
\widetilde\nabla_{\Sigma_q}\mathcal L(\lambda)=\Sigma_q\nabla_{\Sigma_q}\mathcal L(\lambda)\Sigma_q
\end{equation}
instead of $\nabla_{\Sigma_q}\mathcal L(\lambda)$, which is analogous to the use of the natural gradient in \eqref{eq: nat Sigma grad}.
The natural-gradient-like expression in \eqref{eq:ChapterFFVB:nat grad wishart Sigma} is often found very useful in practice; without it, it is difficult 
for the manifold Wishart VB algorithm that only uses the Euclidean $\nabla_{\Sigma_q}\mathcal L(\lambda)$ to converge.
As $\Sigma_q$ lies on the manifold of symmetric and positive definite matrices,
the retraction and vector transport in \eqref{eq: Sigma retraction} and \eqref{eq: Sigma transport} can be used.
To update $\nu_q$, it is recommended to use some adaptive learning method such as ADAM or AdaDelta,
or an approximate natural gradient.
If we set the correlation between $\nabla_{\nu_q}\log q_{\lambda}(V)$ and $\nabla_{\Sigma_q}\log q_{\lambda}(V)$ to be zero,
then the natural gradient of the lower bound with respect to $\nu_q$ is approximated by
\begin{equation}\label{eq:ChapterFFVB:nat grad wishart nu}
\widetilde\nabla_{\nu_q}\mathcal L(\lambda)=\Big(\frac14\psi_{d}'\big(\frac{\nu_q}{2}\big)\Big)^{-1}\nabla_{\nu_q}\mathcal L(\lambda)
\end{equation}
where $\psi_{d}'(\nu)\coloneqq\partial \psi_{d}(\nu)/\partial\nu$.

\subsubsection*{A numerical example}
Data of size $n$ is generated from $\mathcal N_d(0,V)$, where the elements of the true covariance matrix $V$ are $v_{ij}=(-0.5)^{|i-j|}$.
An inverse-Wishart($\nu_0,S_0$) prior is used for $V$ with $\nu_0=d$ and $S_0=0.01I_d$. It is easy to see that the posterior distribution of $V$ is inverse-Wishart with the degree of freedom $\nu=n+d$ and scale matrix $S=S_0+\sum_i y_iy_i^\top$. We run the manifold Wishart VB algorithm with the inital value $\nu_q=n$ and $\Sigma_q=n\times S_n$, where $S_n$ is the sample covariance matrix of the $y_i$.

In the first simulation, we consider $n=50$ and $d=5$. 
The estimation result is summarized in Table \ref{tab:ChapterFFVB:MWVB}.
We consider the second simulation with $n=500$ and $d=50$. 
The lower bound and a plot of the true posterior means of the $v_{ij}$ v.s. their VB estimates are shown in Figure \ref{f:MWVB}.
These results show that the manifold Wishart VB algorithm appears to work effectively and efficiently in this example.

\begin{table}[h]
{\small
\begin{center}
\begin{tabular}{|c|c|c||c|c|c||c|c|c|}
\hline\hline
$v_{ij}$ & True & VB&$v_{ij}$ & True & VB&$v_{ij}$ & True & VB\\
\hline
$v_{11}$ & 0.89 (0.03) & 0.91 (0.04) 			& $v_{21}$ & $-0.37 (0.02)$ &  $-0.37 (0.02)$ 		& $v_{31}$ & 0.39 (0.02)  &  0.39 (0.02)\\
$v_{41}$ & $-0.32 (0.02)$ & $-0.33 (0.03)$	& $v_{51}$ &   0.25 (0.02) &   0.26 (0.02)			& $v_{22}$ &    0.83  (0.03)  &  0.84 (0.03)\\
$v_{32}$ & $-0.39 (0.02)$ & $-0.39 (0.02)$	& $v_{42}$ & 0.29 (0.02)  &  0.30 (0.02)				& $v_{52}$ & $-0.32 (0.02)$  & $-0.32 (0.02)$\\
$v_{33}$ & 0.93  (0.04)  & 0.94 (0.04)			& $v_{43}$ & $-0.50  (0.03)$ & $-0.50 (0.03)$		& $v_{53}$ & 0.35 (0.02)  &   0.35 (0.02)\\
$v_{44}$ & 1.18  (0.06)  &  1.19 (0.07)		& $v_{54}$ & $-0.73  (0.04)$ & $-0.74 (0.04)$		& $v_{55}$ & 0.96  (0.04)  &   0.97 (0.04)\\
\hline\hline  
\end{tabular}
\end{center}}
\caption{The numbers in brackets show the posterior variances. The manifold Wishart Variational Bayes stopped after 75 iterations, with $S=1000$ Monte Carlo samples used for estimating the score-function gradient of the lower bound. The gradients in \eqref{eq:ChapterFFVB:nat grad wishart Sigma} and \eqref{eq:ChapterFFVB:nat grad wishart nu} were used. 
}\label{tab:ChapterFFVB:MWVB}
\end{table}

\begin{figure}[ht]
		\centering
		\includegraphics[width=1\columnwidth]{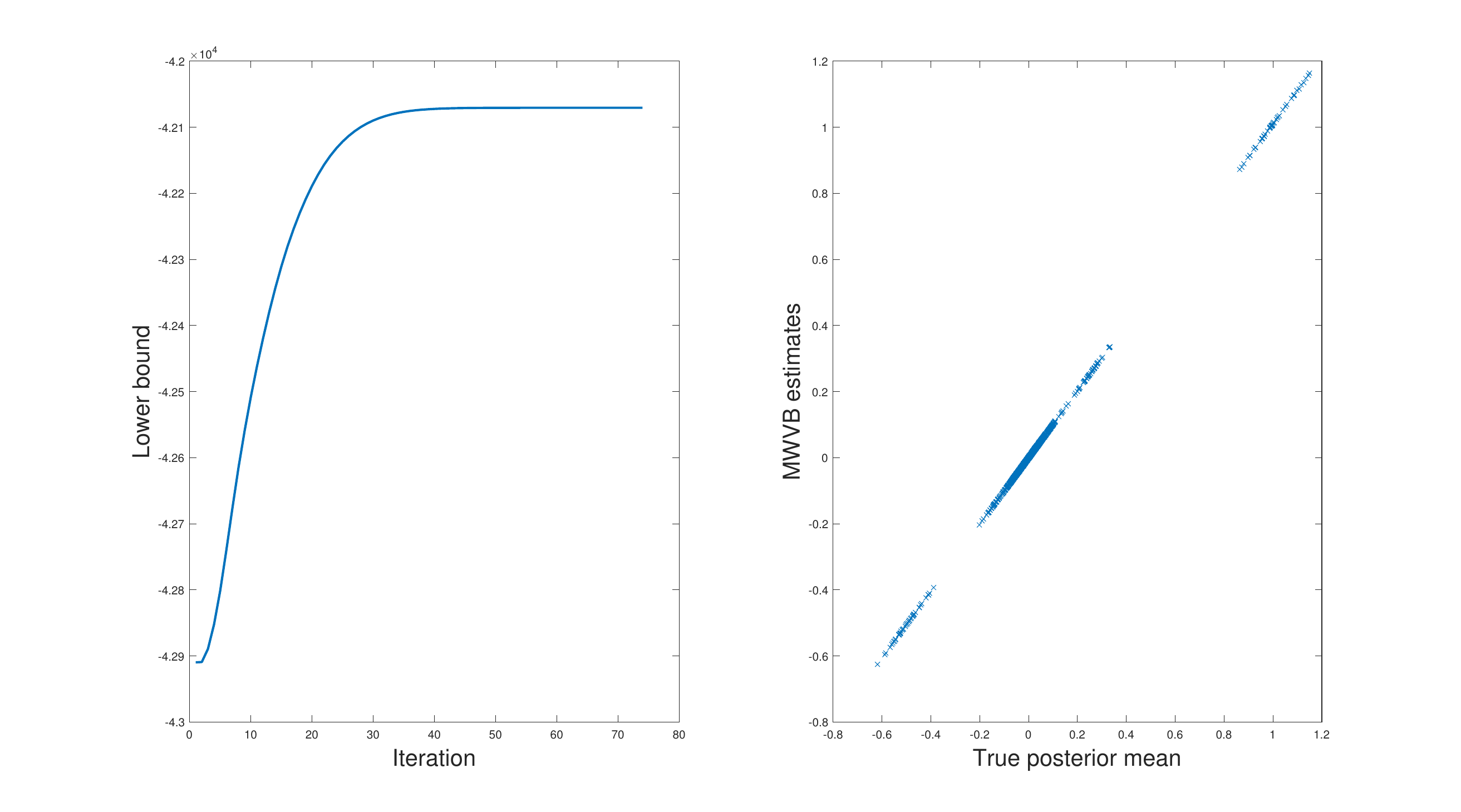}
		\caption{Left: lower bound. Right: true posterior mean of $v_{ij}$ v.s. their Manifold Wishart VB estimates.}\label{f:MWVB}
\end{figure}

\section{Conclusion}\label{sec: conclusions}
We proposed a manifold-based Variational Bayes algorithm that
takes into account both information geometry and geometric structure of the constraint parameter space.
The algorithm is provably convergent for the case when the objective function either non-convex or strongly retraction-convex.
Our numerical experiments demonstrate that the proposed algorithm converges quickly, is stable and compares favourably to the existing VB methods. 
An interesting research direction is to develop a Manifold VB method for  
the Sylvester normalizing flows of \cite{berg2018sylvester}.
We leave it as an interesting project for future research.

\section*{Acknowledgment}
Tran would like to thank Prof. Pierre-Antoine Absil for useful discussions,
and Nghia Nguyen for his help with the SMC implementation.

\section*{Appendix}
\begin{proof}[Proof of Lemma \ref{lemma 1}]
By definition, the Riemannian gradient of $\bar{\mathcal L}$ at $\lambda\in\bar\M$, denoted by $\text{grad}\bar{\mathcal L}(\lambda)$, is the unique
tangent vector in $T_\lambda\bar\M$ such that
\[<\text{grad}\bar{\mathcal L}(\lambda),\xi_\lambda>=\text{D}\bar{\mathcal L}(\lambda)[\xi_\lambda],\;\;\forall\xi_\lambda\in T_\lambda\bar\M.\]
That is
\begin{equation}\label{eq: riemannian grad}
\text{grad}\bar{\mathcal L}(\lambda)^\top I_F(\lambda)\xi_\lambda= \nabla_\lambda\bar{\mathcal L}(\lambda)^\top \xi_\lambda\;\;\forall\xi_\lambda\in T_\lambda\bar\M.
\end{equation}
The natural gradient $\nabla_{\lambda}^{\text{nat}}\bar{\mathcal L}(\lambda)$ in \eqref{eq:natural gradient} satisfies \eqref{eq: riemannian grad}. Indeed, due to the symmetry of $I_F(\lambda)$, we have
\begin{align*}
\text{grad}\bar{\mathcal L}(\lambda)^\top I_F(\lambda)\xi_\lambda&=(I_F^{-1}(\lambda)\nabla_\lambda\bar{\mathcal L}(\lambda))^T I_F(\lambda)\xi_\lambda\\
&=\nabla_\lambda\bar{\mathcal L}(\lambda)^\top (I_F^{-1}(\lambda))^T (I_F(\lambda) \xi_\lambda\\
&= \nabla_\lambda\bar{\mathcal L}(\lambda)^\top \xi_\lambda.
\end{align*}
If $\zeta_\lambda\in T_\lambda\bar\M$ also satisfies \eqref{eq: riemannian grad}, then 
\[\big(\zeta_\lambda-\text{grad}\bar{\mathcal L}(\lambda)\big)^\top I_F(\lambda)\xi_\lambda=0\;\;\forall\xi_\lambda\in T_\lambda\bar\M\]
which implies that $\zeta_\lambda=\text{grad}\bar{\mathcal L}(\lambda)$.
\end{proof}

\begin{proof}[Proof of Lemma \ref{lem: Stiefel projection}]
Much of our proof is taken from \cite{Tagare:2011}.
The idea of the proof is that we want to find a vector in $T_W\M$ that represents the action of 
the differential $\text{D}\mathcal L(W)$ via the natural gradient $G$. 
Let $W_\bot\in\text{Mat}(n,n-p)$ such that its columns together with the columns of $W$ form an orthonormal basis for $\mathbb{R}^n$.
As $[W,W_\bot]$ is an orthogonal matrix, for any $U\in\text{Mat}(n,p)$, there exists a $C\in\text{Mat}(n,p)$ such that $U=[W, W_\bot]C$.
Write $C=\begin{pmatrix}U_{W}\\ U_{W_\bot} \end{pmatrix}$ with $U_W$ the $p\times p$-matrix formed by the first $p$ rows of $C$, $U_{W_\bot}$ the $(n-p)\times p$ matrix formed by the last $n-p$ rows of $C$. That is, any matrix $U\in\text{Mat}(n,p)$ can be written as
\[U=WU_W+W_{\bot}U_{W_\bot},\;\;U_W\in\text{Mat}(p,p),\;U_{W_\bot}\in\text{Mat}(n-p,p).\]
If $Z=WZ_W+W_{\bot}Z_{W_\bot}\in T_W\M$, then $Z^\top W+W^\top Z=0_{p\times p}$ implies that $Z_W+Z_W^\top=0$.
So $T_W\M$ is a subset of the set 
\[\{Z=WZ_W+W_\bot Z_{W_\bot}: Z_W=-Z_W^\top,\;Z_{W_\bot}\in\text{Mat}(n-p,p)\}.\]
It's easy to check that this set is also a subset of $T_W\M$. We arrive at an alternative representation of the tangent space $T_W\M$ 
\begin{equation}
T_W\M=\{WZ_W+W_\bot Z_{W_\bot}: Z_{W}\in\text{Mat}(p,p), Z_{W_\bot}\in\text{Mat}(n-p,p), \ Z_W=-Z_W^\top\}.
\end{equation}
We want to find a vector in $T_W\M$ that represents the action of the differential $\text{D}\mathcal L(W)$ on $T_W\M$,
with $\text{D}\mathcal L(W)$ characterized by $G$; i.e. find $U=WU_W+W_\bot U_{W_\bot}\in T_W\M$, $U_W=-U_W^\top$ such that
\begin{equation}\label{eq:UZ def}
<U,Z>_\text{Euc}=\text{D}\mathcal L(W)[Z],\;\;\forall Z\in T_W\M.
\end{equation}
As the gradient $G=\text{grad}\bar{\mathcal L}(W)\in\text{Mat}(n,p)$, it can be written as $G=WG_W+W_\bot G_{W_\bot}$.
Based on the natural gradient $G$, 
\begin{align}\label{eq:Dz}
\text{D}\mathcal L(W)[Z]&=\text{tr}(G^\top Z)\nonumber\\
&=\text{tr}(G_W^\top Z_W)+\text{tr}(G_{W_\bot}^\top Z_{W_\bot})\nonumber\\
&=\text{tr}\big(\text{skew}(G_W)^\top Z_W\big)+\text{tr}(G_{W_\bot}^\top Z_{W_\bot}),
\end{align}
where we have used the fact that 
$G_W=\text{skew}(G_W)+\text{sym}(G_W)$ with $\text{sym}(G_W)=(G_W+G_W^\top)/2$,
and that $\text{tr}(\text{sym}(G_W)^\top Z_W)=0$.
We have that
\begin{equation}\label{eq:Uz}
<U,Z>_\text{Euc}=\text{tr}(U_W^\top Z_W)+\text{tr}(U_{W_\bot}^\top Z_{W_\bot}).
\end{equation}
Comparing \eqref{eq:Dz} and \eqref{eq:Uz} gives
\[U=W \text{skew}(G_W)+W_\bot G_{W_\bot}.\]
As $G_W=W^\top G$ and $W_\bot G_{W_\bot}=G-W G_W=(I_n-WW^\top)G$,
\[U=W \text{skew}(W^\top G)+(I_n-WW^\top)G.\]
From \eqref{eq:UZ def}, $U$ is the Riemannian gradient of $\mathcal L$. This completes the proof.
\end{proof}

\begin{proof}[Proof of Theorem \ref{main-Theorem}]
(i) First, we consider the case $\mathcal L$ is not assumed to be convex. Consider the update
\begin{equation}
\label{equa:SDG_1}
\begin{aligned}
\lambda_{t+1}=&R_{\lambda_t}(-Y_t),\\
Y_{t+1}=&\zeta \Gamma_{\lambda_t\to \lambda_{t+1}}(Y_t)+\gamma\nabla \mathcal L(\lambda_{t+1})+\gamma\Delta M_{t+1},
\end{aligned}
\end{equation}
\noindent where $\zeta, \gamma\in (0,1)$, $\Delta M_{t+1}$ is a martingale difference.
Since $|\nabla \mathcal L|+|\Delta M|\leq b_{\mathcal L}$ a.s, we have
$|Y_t|\leq \frac{2\gamma b_{\mathcal L}}{1-\zeta}$. This can be proved as follows.
First for the rest of this proof, we use $O(a)$ to denote vector/scalar with $|O(a)|\leq a$.
Let $P_{x\to y}$ be a parallel translation.
From Lemma 6 \cite{huang2015riemannian}, there exists a constant $a_1>0$ such that
\begin{equation}
\label{G-P}|\Gamma_{\lambda_t\to \lambda_{t+1}}(Y_t)-P_{\lambda_t\to \lambda_{t+1}}(Y_t)|= O(a_1)|Y_t|^2,
\end{equation}
which, together with $|P_{\lambda_t\to \lambda_{t+1}}(Y_t)|=|Y_t|$ implies
\begin{equation}
\label{G-P-2}|\Gamma_{\lambda_t\to \lambda_{t+1}}(Y_t)|\leq |P_{\lambda_t\to \lambda_{t+1}}(Y_t)|+ O(a_1)|Y_t|^2=(1+O(a_1)|Y_t|)|Y_t|.
\end{equation}
If we take norm of the second equation in \eqref{equa:SDG_1}, and using \eqref{G-P-2} we have
\begin{align}
\label{Y_t_expansion}
|Y_{t+1}|&\leq \zeta |\Gamma_{\lambda_t\to \lambda_{t+1}}(Y_t)|+\gamma(|\nabla \mathcal L(\lambda_{t+1})+|\Delta M_{t+1}|)\notag\\
&\leq\zeta (1+O(a_1)|Y_t|)|Y_t|+\gamma(|\nabla \mathcal L(\lambda_{t+1})+|\Delta M_{t+1}|)\notag\\
&\leq (\zeta+a_1|Y_t|) |Y_t|+\gamma b_{\mathcal L}. 
\end{align}
With $\gamma<\frac{(1-\zeta)^2}{4a_1b_{\mathcal L}}$ and $|Y_0|\leq \frac{2\gamma b_{\mathcal L}}{1-\zeta}$ if $\zeta,\gamma$ are sufficiently small, using induction, we can show that
$|Y_t|\leq \frac{2\gamma b_{\mathcal L}}{1-\zeta}$.
Indeed assuming $|Y_t|\leq \frac{2\gamma b_{\mathcal L}}{1-\zeta}$, using \eqref{Y_t_expansion}, we have
$$
\begin{aligned}
|Y_{t+1}|\leq &\left(\zeta +a_1\frac{2\gamma b_{\mathcal L}}{1-\zeta}\right)\frac{2\gamma b_{\mathcal L}}{1-\zeta}+\gamma b_{\mathcal L}\\
\leq &
\left(\zeta +a_1\frac{2b_{\mathcal L}}{1-\zeta}\frac{(1-\zeta)^2}{4a_1b_{\mathcal L}}\right)\frac{2\gamma b_{\mathcal L}}{1-\zeta}+\gamma b_{\mathcal L}\\
\leq& \frac{1+\zeta}{1-\zeta}
\gamma b_{\mathcal L}+\gamma b_{\mathcal L}\\
\leq& \frac{2\gamma b_{\mathcal L}}{1-\zeta}.
\end{aligned}
$$

\noindent Next, applying the Taylor expansion
to the function $t\mapsto\mathcal L(R_\lambda(t\eta))$
and using the Lipschitz continuity of $\nabla \mathcal L$
as well as the smoothness of $R_\lambda(\cdot)$ in $\Sc$, there exists a constant $L$ such that
\begin{equation}\label{thm1-e2}
\mathcal L(\lambda_{t+1})=\mathcal L(\lambda_t)-\langle Y_t, \nabla \mathcal L(\lambda_t)\rangle+O(\frac{L}2)|Y_t|^2.
\end{equation}
From \eqref{equa:SDG_1}, squaring both sides of the second equation we obtain
\begin{equation}
\label{Equation:Y_t}
\begin{aligned}
|Y_{t+1}|^2
=&\zeta^2 |\Gamma_{\lambda_t\to \lambda_{t+1}}(Y_t)|^2+2\gamma\zeta \langle \Gamma_{\lambda_t\to \lambda_{t+1}}(Y_t), \nabla \mathcal L(\lambda_{t+1})\rangle+\gamma^2|\nabla \mathcal L(\lambda_{t+1})|^2\\
&+\Delta N_t+\gamma^2|\Delta M_{t+1}|^2,
\end{aligned}
\end{equation}
where
$$\Delta N_t={2}\langle \zeta\Gamma_{\lambda_t\to \lambda_{t+1}}(Y_t)+\gamma \nabla \mathcal L(\lambda_{t+1}), \Delta M_{t+1}\rangle.$$
\noindent From \eqref{G-P-2}, with $\gamma$ small enough so that $\frac{a_1\gamma b_{\mathcal L}}{1-\zeta}\leq1$, we have
\begin{align*}
 |\Gamma_{\lambda_t\to \lambda_{t+1}}(Y_t)|^2&=|Y_t|^2+O(2a_1+a_1^2|Y_t|)|Y_t|^3\\
 &=|Y_t|^2+O(3a_1)|Y_t|^3\\
&=|Y_t|^2+O(\frac{6a_1b_{\mathcal L}\gamma}{1-\zeta})|Y_t|^2.
\end{align*}
 
\noindent Consider the second term without the constant in \eqref{Equation:Y_t}, we have 
\begin{equation}
\begin{aligned}
 &\langle \Gamma_{\lambda_t\to \lambda_{t+1}}(Y_t), \nabla \mathcal L(\lambda_{t+1})\rangle\\ \notag
&=  \underbrace{\langle P_{\lambda_t\to \lambda_{t+1}}(Y_t), \nabla \mathcal L(\lambda_{t+1})\rangle}_{\text{(I)}}+ \underbrace{\langle \Gamma_{\lambda_t\to \lambda_{t+1}}(Y_t)-P_{\lambda_t\to \lambda_{t+1}}(Y_t), \nabla \mathcal L(\lambda_{t+1})\rangle}_{\text{(II)}}.
\end{aligned}
\end{equation}
Recall that $P_{x\to y}$  is an isometry, see, for example \cite{absil2009optimization}.
As a result we have $$\langle P_{\lambda_t\to \lambda_{t+1}}(Y_t), \nabla \mathcal L(\lambda_{t+1})\rangle= \langle Y_t, P_{\lambda_t\to \lambda_{t+1}}^{-1}(\nabla \mathcal L(\lambda_{t+1})\rangle.  $$
In view of the fundamental theorem of calculus (see \cite[Lemma 8]{huang2015riemannian}), we have
 $P_{\lambda_t\to \lambda_{t+1}}^{-1}\nabla \mathcal L(\lambda_{t+1})=\nabla \mathcal L(\lambda_t)+ O(a_2)|Y_t|$ for some constant $a_2>0$. Then, for $\gamma\leq\frac{1-\zeta}{a_2}$, we have
\begin{equation}\label{thm1-e5}
\begin{aligned}
|\nabla \mathcal L(\lambda_{t+1})|^2=&|P_{\lambda_t\to \lambda_{t+1}}^{-1}\nabla \mathcal L(\lambda_{t+1})|^2\\
=&|\nabla \mathcal L(\lambda_t)|^2+2O(a_2)|Y_t||\nabla \mathcal L(\lambda_t)|+O(a_2)^2|Y_t|^2\\
=&|\nabla \mathcal L(\lambda_t)|^2+ O(4a_2b_{\mathcal L})|Y_t| \,\text{ (since } |Y_t|\leq\frac{\gamma b_{\mathcal L}}{1-\zeta}\leq 2b_{\mathcal L})
\\
=&|\nabla \mathcal L(\lambda_t)|^2+ O(\frac{8a_2b_{\mathcal L}^2}{1-\zeta})\gamma,
\end{aligned}
\end{equation}
and
\begin{equation}
\begin{aligned}
  (I)=& \langle Y_t, P_{\lambda_t\to \lambda_{t+1}}^{-1}(\nabla \mathcal L(\lambda_{t+1})\rangle\\
=& \langle Y_t, \nabla \mathcal L(\lambda_t)\rangle+\langle Y_t, P_{\lambda_t\to \lambda_{t+1}}^{-1}(\nabla \mathcal L(\lambda_{t+1}))-\nabla \mathcal L(\lambda_t)\rangle\\
=&\langle Y_t, \nabla \mathcal L(\lambda_t)\rangle+O(a_2) |Y_t|^2.
\end{aligned}
\end{equation}
For (II), since $\nabla \mathcal L$ is a.s bounded by $b_{\mathcal L}$, we derive from \eqref{G-P-2} that
\begin{equation}
\begin{aligned}
  (II)=&O(a_1b_{\mathcal L})|Y_t|^2. 
\end{aligned}
\end{equation}

\noindent As a result of (I) and (II), we have
\begin{equation}
\label{equation: Y_t_2}
 \langle \Gamma_{\lambda_t\to \lambda_{t+1}}(Y_t), \nabla \mathcal L(\lambda_{t+1})\rangle= \langle Y_t, \nabla \mathcal L(\lambda_t)\rangle+O(a_1b_{\mathcal L}+a_2)|Y_t|^2.
\end{equation} 
Plugging \eqref{equation: Y_t_2} into \eqref{Equation:Y_t},

\begin{equation}\label{thm1-e6}
|Y_{t+1}|^2=\zeta^2|Y_t|^2+2\gamma\zeta\langle Y_t, \nabla \mathcal L(\lambda_t)\rangle+\gamma^2|\nabla \mathcal L(\lambda_{t+1})|^2+ O(a_3)\gamma|Y_t|^2
+\Delta { N_t}+\gamma^2\Delta M_{t+1}^2.
\end{equation}
where {$a_3=\frac{6a_1b_{\mathcal L}}{1-\zeta}+2(a_1b_{\mathcal L}+a_2)\zeta$}.
\noindent Next, using the second equation in \eqref{equa:SDG_1} and \eqref{equation: Y_t_2}, we have
\begin{equation}\label{thm1-e7}
\begin{aligned}
\langle Y_{t+1}, \nabla \mathcal L(\lambda_{t+1})\rangle
=&\zeta \langle \Gamma_{\lambda_t\to \lambda_{t+1}}(Y_t), \nabla \mathcal L(\lambda_{t+1})\rangle+\gamma|\nabla \mathcal L(\lambda_{t+1})|^2+{\gamma \langle \triangle M_t, \nabla \mathcal L(\lambda_{t+1})\rangle}\\
=&\zeta\langle Y_t, \nabla \mathcal L(\lambda_t)\rangle+O({\zeta}(a_1b_{\mathcal L}+a_2))|Y_t|^2+\gamma|\nabla \mathcal L(\lambda_{t+1})|^2+{\gamma \langle \triangle M_t, \nabla \mathcal L(\lambda_{t+1})\rangle}\\
=& \langle Y_t, \nabla \mathcal L(\lambda_t)\rangle-(1-\zeta) \langle Y_t, \nabla \mathcal L(\lambda_t)\rangle+\gamma \langle \triangle M_t, \nabla \mathcal L(\lambda_{t+1})\rangle\\
&+O({\zeta}(a_1b_{\mathcal L}+a_2))|Y_t|^2+\gamma|\nabla \mathcal L(\lambda_{t+1})|^2.
\end{aligned}
\end{equation}
Multiplying \eqref{thm1-e6}, \eqref{thm1-e7} with $A$ (to be chosen later) and $-\frac{1}{1-\zeta}$ respectively and then adding to \eqref{thm1-e2} we have
\begin{equation}
\label{L-inequality1}
\begin{aligned}
\mathcal L(\lambda_{t+1})&+A|Y_{t+1}|^2-\frac{1}{1-\zeta}\langle Y_{t+1}, \nabla \mathcal L(\lambda_{t+1})\rangle\\
\leq& \mathcal L(\lambda_t)+A|Y_t|^2-\frac{1}{1-\zeta}\langle Y_t, \nabla \mathcal L(\lambda_t)\rangle\\
&-\left(A(1-\zeta^2)-AO(a_3)\gamma -O\left(\frac{L}2+{\zeta}\frac{a_1b_{\mathcal L}+a_2}{1-\zeta}\right)\right)|Y_t|^2+2A\gamma\zeta\langle Y_t, \nabla \mathcal L(\lambda_t)\rangle\\
&-\frac{\gamma}{1-\zeta}(1{-}(1-\zeta)\gamma {A}) |\nabla \mathcal L(\lambda_{t+1})|^2 \\
&+A\Delta N+\gamma^2 {A}\Delta M_{t+1}^2-\frac1{1-\zeta}\gamma \langle \triangle M_t, \nabla \mathcal L(\lambda_{t+1})\rangle.\\
\end{aligned}
\end{equation}
Note that,
\begin{align*}
 |2A\gamma\zeta\langle Y_t, \nabla \mathcal L(\lambda_t)\rangle|&\leq
 |2A\gamma\langle Y_t, \nabla \mathcal L(\lambda_t)\rangle|\notag\\
 &\leq 2A\gamma(1-\zeta)|Y_t|^2+A\frac{\gamma|\nabla \mathcal L(\lambda_t)|^{{2}}}{2(1-\zeta)}\notag\\
 &\leq 2A\gamma(1-\zeta)|Y_t|^2+A\frac{\gamma|\nabla \mathcal L(\lambda_{t+1})|^{{2}}}{2(1-\zeta)} +A O\left(\zeta\frac{8a_2 b_{\mathcal L}^2}{(1-\zeta)^2} \right)\gamma^2\quad \text{by \eqref{thm1-e5}.}
\end{align*}
(In the second inequality above, we have used the fact that $|<p,q>|\leq \frac{p^2}{2\epsilon} +\frac{\epsilon q^2}{2}$
with $\epsilon=(1-\zeta)$). Plugging back to \eqref{L-inequality1}, we obtain
\begin{equation}
\label{equation-inequality-L}
\begin{aligned}
\mathcal L(\lambda_{t+1})&+A|Y_{t+1}|^2-\frac{1}{1-\zeta}\langle Y_{t+1}, \nabla \mathcal L(\lambda_{t+1})\rangle\\
&\leq \mathcal L(\lambda_t)+A|Y_t|^2-\frac{1}{1-\zeta}\langle Y_t, \nabla \mathcal L(\lambda_t)\rangle\\
&-\left(A(1-\zeta^2)-2(1-\zeta)\gamma A- AO(a_3)\gamma -O\left(\frac{L}2+{\zeta}\frac{a_1b_{\mathcal L}+a_2}{1-\zeta}\right)\right)|Y_t|^2\\
&-\frac{\gamma}{1-\zeta}(\frac12-(1-\zeta){A}\gamma) |\nabla \mathcal L(\lambda_{t+1})|^2 \\
&+A\Delta N+\gamma^2 {A}\Delta M_{t+1}^2+ A\zeta\frac{8a_2b_{\mathcal L}^2}{(1-\zeta)^2}\gamma^2-\frac1{1-\zeta}\gamma \langle \triangle M_t, \nabla \mathcal L(\lambda_{t+1})\rangle.
\end{aligned}
\end{equation}
Now
select $A=\frac{4}{(1-\zeta^2)}\left(\frac{L}2{-\zeta}\frac{a_1b_{\mathcal L}+a_2}{1-\zeta}\right)$
and
let $\gamma$ satisfy
$$2(1-\zeta)\gamma - a_3\gamma \leq \frac{(1-\zeta^2)}{4};\,
(1-\zeta)\gamma{A}<\frac14 \,\text{ and } \gamma\leq\frac{1-\zeta}{a_2},$$ taking the expectation of both sides of \eqref{equation-inequality-L}, we have
\begin{align}
\E&\left(\mathcal L(\lambda_{t+1})+A|Y_{t+1}|^2-\frac{1}{1-\zeta}\langle Y_{t+1}, \nabla \mathcal L(\lambda_{t+1})\rangle\right)\\ \notag
\leq& \E\left(\mathcal L(\lambda_t)+A|Y_t|^2-\frac{1}{1-\zeta}\langle Y_t, \nabla \mathcal L(\lambda_t)\rangle\right)-\frac{A(1-\zeta^2)}4\E |Y_t|^2-\frac{\gamma}{(1-\zeta)}|\nabla \mathcal L(\lambda_{t+1})|^2\\ \notag
&+\gamma^2\left(A b_{\mathcal L}^2+ A\zeta\frac{8a_2b_{\mathcal L}^2}{(1-\zeta)^2}\right).
\end{align}
\noindent Take the sum, we have
$$
\begin{aligned}
\sum_{t=1}^T\E &(\frac{A(1-\zeta^2)}4 |Y_t|^2+\frac{\gamma}{(1-\zeta)}|\nabla \mathcal L(\lambda_{t+1})|^2)\\
&\leq  (\mathcal L(\lambda_0)-\mathcal L(\lambda_*)) -\frac{1}{1-\zeta}\langle Y_{T+1}, \nabla \mathcal L(\lambda_{T+1})\rangle+\gamma^2\left(A b_{\mathcal L}^2+ A\zeta\frac{8a_2b_{\mathcal L}^2}{(1-\zeta)^2}\right)T
\leq C(\gamma^2 T+1).
\end{aligned}$$
Then
$$\sum_{t=1}^T \E|\nabla \mathcal L(\lambda_{t+1})|^2\leq  C(\gamma T+\frac1{\gamma }).$$
Hence
$$T \min_{t\in [1,T]}\E |\nabla \mathcal L(\lambda_{t+1})|^2=\sum_{t=1}^T\min_{t\in [1,T]} \E|\nabla \mathcal L(\lambda_{t+1})|^2\leq  C(\gamma T+\frac1{\gamma }).$$
If we choose $\gamma^2=\frac1T$, then
$$\min_{t\in [1,T]}\E |\nabla \mathcal L(\lambda_{t+1})|^2\leq  \frac{C}{\sqrt{T}}.$$
(ii) Now we assume that $\mathcal L$ is strongly retraction convex
with
\begin{equation}\label{sc-mu}
\mathcal L(\lambda_t)-\mathcal L(\lambda_*)\leq \tilde\mu|\nabla \mathcal L(\lambda_{t})|^2,
\end{equation}
for some $\tilde\mu>0$ (see Remark \ref{rm1}).
 Let $\rho:=1-\frac{\gamma}{4\tilde\mu(1-\zeta)}>\frac{1+\zeta}2$ when $\gamma$ is small.
Rewrite \eqref{thm1-e7}
\begin{equation}\label{thm2-e7}
\begin{aligned}
\langle Y_{t+1}, \nabla \mathcal L(\lambda_{t+1})\rangle
=&\rho \langle Y_t, \nabla \mathcal L(\lambda_t)\rangle-(\rho-\zeta) \langle Y_t, \nabla \mathcal L(\lambda_t)\rangle+\gamma \langle \triangle M_t, \nabla \mathcal L(\lambda_{t+1})\rangle\\
&+O(a_1b_{\mathcal L}+a_2)|Y_t|^2+\gamma|\nabla \mathcal L(\lambda_{t+1})|^2.
\end{aligned}
\end{equation}
Multiplying \eqref{thm1-e6}, \eqref{thm2-e7} with $A$ (to be chosen later) and $-\frac{1}{\rho-\zeta}$ respectively and then adding to \eqref{thm1-e2} we have

\begin{equation}\label{thm2-e2-3}
\begin{aligned}
\mathcal L(\lambda_{t+1})&-\mathcal L(\lambda_*)+A|Y_{t+1}|^2-\frac{1}{\rho-\zeta}\langle Y_{t+1}, \nabla \mathcal L(\lambda_{t+1})\rangle\\
\leq& \mathcal L(\lambda_t)-\mathcal L(\lambda_*)+A\zeta^2|Y_t|^2-\frac{\rho}{\rho-\zeta}\langle Y_t, \nabla \mathcal L(\lambda_t)\rangle+2A\gamma\zeta\langle Y_t, \nabla \mathcal L(\lambda_t)\rangle +A\gamma O(a_3) Y_t^2+A\gamma^2M^2_t\\
&+O(\frac{L}2)|Y_t|^2+O(\frac{a_1b_{\mathcal L}+a_2}{\rho-\zeta})|Y_t|^2+A\Delta N_t-\frac1{\rho-\zeta}\gamma \langle \triangle M_t, \nabla \mathcal L(\lambda_{t+1})\rangle\\
&-\frac{\gamma}{\rho-\zeta}(1-(\rho-\zeta)\gamma A) |\nabla \mathcal L(\lambda_{t+1})|^2 \\
\leq &\rho\left(\mathcal L(\lambda_t)-\mathcal L(\lambda_*)+A|Y_t|^2-\frac{1}{\rho-\zeta}\langle Y_t,\nabla \mathcal L(\lambda_t)\rangle\right)\\
&+\left((1-\rho)(\mathcal L(\lambda_t)-\mathcal L(\lambda_*))-\frac{1}{\rho-\zeta}(1-(\rho-\zeta)\gamma) |\nabla \mathcal L(\lambda_{t})|^2\right)\\
&
-\left(A(\rho-\zeta^2)-A\gamma a_3-\frac{L}2-\frac{a_1b_{\mathcal L}+a_2}{\rho-\zeta}\right)|Y_t|^2
+Ab_{\mathcal L}^2\gamma^2\\
&+2A\gamma\zeta\langle Y_t, \nabla \mathcal L(\lambda_t)\rangle+A\Delta N_t-\frac1{\rho-\zeta}\gamma \langle \triangle M_t, \nabla \mathcal L(\lambda_{t+1})\rangle.
\end{aligned}
\end{equation}
We have
 $$
\begin{aligned}|2A\gamma\zeta\langle Y_t, \nabla \mathcal L(\lambda_t)\rangle|\leq& 2A\gamma(\rho-\zeta)|Y_t|^2+A\frac{\gamma|\nabla \mathcal L(\lambda_t)|^2}{2(\rho-\zeta)}\\\leq& 2A\gamma(\rho-\zeta)|Y_t|^2+A\frac{\gamma|\nabla \mathcal L(\lambda_{t+1})|^2}{2(\rho-\zeta)}+O(\frac{Aa_2b_{\mathcal L}^2}{(1-\zeta)(\rho-\zeta)})\gamma^2.
\end{aligned}$$
Plugging this into \eqref{thm2-e2-3}, we have
\begin{equation}\label{thm2-e2}
\begin{aligned}
\mathcal L(\lambda_{t+1})&-\mathcal L(\lambda_*)+A|Y_{t+1}|^2-\frac{1}{\rho-\zeta}\langle Y_{t+1}, \nabla \mathcal L(\lambda_{t+1})\rangle\\
\leq &\rho\left(\mathcal L(\lambda_t)-\mathcal L(\lambda_*)+A|Y_t|^2-\frac{1}{\rho-\zeta}\langle Y_t,\nabla \mathcal L(\lambda_t)\rangle\right)\\
&+\left((1-\rho)(\mathcal L(\lambda_t)-\mathcal L(\lambda_*))-\frac{1}{\rho-\zeta}(\frac12-(\rho-\zeta)\gamma) |\nabla \mathcal L(\lambda_{t})|^2\right)\\
&
-\left(A(\rho-\zeta^2)-A\gamma a_3-\frac{L}2-\frac{a_1b_{\mathcal L}+a_2}{\rho-\zeta}-2A\gamma(\rho-\zeta)\right)|Y_t|^2
\\
&+A\Delta N_t-\frac1{\rho-\zeta}\gamma \langle \triangle M_t, \nabla \mathcal L(\lambda_{t+1})\rangle+\left(\frac{Aa_2b_{\mathcal L}^2}{(1-\zeta)(\rho-\zeta)}+Ab_{\mathcal L}^2\right)\gamma^2.
\end{aligned}
\end{equation}

\noindent We deduce from \eqref{sc-mu} that 
\begin{align}
\label{thm2-e3}
(1-\rho)(\mathcal L(\lambda_t)-\mathcal L(\lambda_*))&=\frac{\gamma}{4\tilde\mu(1-\zeta)}(\mathcal L(\lambda_t)-\mathcal L(\lambda_*))\notag\\
&\leq \frac{\gamma}{4(1-\zeta)}|\nabla \mathcal L(\lambda_{t})|^2\\
&\leq \frac{1}{4(\rho-\zeta)} |\nabla \mathcal L(\lambda_{t})|^2\notag.
\end{align}
\noindent On the other hand  due to \eqref{thm1-e5} we have
\begin{align}\label{thm2-e4}
\frac{\gamma}{\rho-\zeta}|\nabla \mathcal L(\lambda_{t+1})|^2
=&\frac{\rho\gamma}{\rho-\zeta}|\nabla \mathcal L(\lambda_{t+1})|^2
+\frac{(1-\rho)\gamma}{\rho-\zeta} |\nabla \mathcal L(\lambda_{t+1})|^2\notag\\
=&\frac{\rho\gamma}{\rho-\zeta}|\nabla \mathcal L(\lambda_{t})|^2
+\frac{(1-\rho)\gamma}{\rho-\zeta} |\nabla \mathcal L(\lambda_{t+1})|^2
+O(\frac{8a_2b_{\mathcal L}^2}{1-\zeta})\gamma\frac{\gamma\rho}{\rho-\zeta}\\
=&\frac{\rho\gamma}{\rho-\zeta}|\nabla \mathcal L(\lambda_{t})|^2
+\frac{(1-\rho)\gamma}{\rho-\zeta} |\nabla \mathcal L(\lambda_{t+1})|^2+O\left(\frac{8a_2b_{\mathcal L}^2\rho}{(1-\zeta)^2(\rho-\zeta)}\right)\gamma^2.\notag
\end{align}
Adding \eqref{thm2-e2} and \eqref{thm2-e4} and using \eqref{thm2-e3} we have for
$$
V_t:=\mathcal L(\lambda_t)-\mathcal L(\lambda_*)+A|Y_t|^2-\frac{1}{\rho-\zeta}\langle Y_t,\nabla \mathcal L(\lambda_t)\rangle+\frac{\gamma}{(\rho-\zeta)}|\nabla \mathcal L(\lambda_{t})|^2,$$
that
\begin{equation}
\begin{aligned}
V_{t+1}
\leq &\rho V_t-\left(\frac{1}{\rho-\zeta}(\frac14-(\rho-\zeta)\gamma)\right) |\nabla \mathcal L(\lambda_{t})|^2\\
&
-\left(A(\rho-\zeta^2)-A\gamma a_3-\frac{L}2-\frac{a_1b_{\mathcal L}+a_2}{\rho-\zeta}-2A\gamma(\rho-\zeta)\right)|Y_t|^2
\\
&+\left(\frac{Aa_2b_{\mathcal L}^2}{(1-\zeta)(\rho-\zeta)}+Ab_{\mathcal L}^2+\frac{8a_2b_{\mathcal L}^2\rho}{(1-\zeta)^2(\rho-\zeta)}\right)\gamma^2\\
&+A\Delta N_t-\frac1{\rho-\zeta}\gamma \langle \triangle M_t, \nabla \mathcal L(\lambda_{t+1})\rangle
\end{aligned}
\end{equation}
Since $\mathcal L$ is strongly retraction-convex, there exists $\bar\mu>0$ such that
$|\nabla \mathcal L(\lambda)|^2\leq\tilde  L^2|R^{-1}_{\lambda^*}(\lambda)|^2\leq\tilde  L^2\bar\mu (\mathcal L(\lambda_T)-\mathcal L(\lambda_*))$.
As a result, if $A>\frac{2\tilde L^2\bar\mu}{(1-\zeta)^2}>\frac{\tilde L^2\bar\mu}{2(\rho-\zeta)^2}$ then we have from Cauchy's inequality that
$$V_t=A|Y_t|^2-\frac{1}{\rho-\zeta}\langle Y_t,\nabla \mathcal L(\lambda_t)\rangle+\left(\mathcal L(\lambda_T)-\mathcal L(\lambda_*)\right)\geq \frac{1}{2}\left(\mathcal L(\lambda_T)-\mathcal L(\lambda_*)\right)$$

\noindent Let $A$ and $\gamma$ be such that
$$A(\rho-\zeta^2)-A\gamma a_3-2A\gamma(\rho-\zeta)-\frac{L}2-\frac{a_1b_{\mathcal L}+a_2}{\rho-\zeta}>0, A>\rho, A>\frac{2\tilde L^2\bar\mu}{(1-\zeta)^2},$$
and
$$\frac{1}{\rho-\zeta}(\frac14-(\rho-\zeta)\gamma)>0,$$ 

\noindent we have
$$\E V_{t+1}\leq \rho \E V_{t} +C\gamma^2.$$
As a result, recall the definition of $\rho$, we have

$$\E V_{t}\leq \rho^t C_0+\frac{C}{1-\rho}\gamma^2\leq \left(1-\frac{\tilde\mu\gamma}{2(1-\zeta)}\right)^tC_0+C\gamma^2,$$
where $C_0=\mathbb E V_0$.\\

\noindent With $\gamma=\frac{1}{T^\eps}$, we have
$$
\E V_T\leq \left((1-\wtd\mu T^{1-\eps})^{T^{1-\eps}}\right)^{T^\eps}+C T^{2\eps-2}.
$$
When $T$ is large then
$$
\left((1-\wtd\mu T^{1-\eps})^{T^{1-\eps}}\right)^{T^\eps}\approx e^{-\frac{T^\eps}{\wtd\mu}}<< T^{2\eps-2}.
$$
Thus for $C_{\eps}=\max\{C_0,C\}$, 
$$ \E V_T\leq C_\eps T^{2\eps-2}.$$
we have
$$\frac1{\tilde L^2\bar\mu}\E |\nabla\mathcal L(\lambda_t)|^2\leq \frac12\E \mathcal L(\lambda_T)-\mathcal L(\lambda_*)\leq \E V_T\leq C_\eps T^{2\eps-2}.$$

\end{proof}

\bibliographystyle{apalike}
\bibliography{LV}

\end{document}